\documentclass{article} % For LaTeX2e
\usepackage{iclr2027_conference,times}

\usepackage{amsmath,amsfonts,bm}

\def\eqref#1{Equation~\ref{#1}}
\def\1{\bm{1}}

\DeclareMathAlphabet{\mathsfit}{\encodingdefault}{\sfdefault}{m}{sl}
\SetMathAlphabet{\mathsfit}{bold}{\encodingdefault}{\sfdefault}{bx}{n}

\newcommand{\E}{\mathbb{E}}

\usepackage{url}
\usepackage{enumitem}
\usepackage{algorithm}
\usepackage{algpseudocode}
\algrenewcommand{\algorithmiccomment}[1]{\hfill // #1}
\usepackage{amsthm}
\DeclareMathAlphabet{\mathbbold}{U}{bbold}{m}{n} % hollow 1 for indicator variables
\theoremstyle{definition}

\newtheorem{theorem}{Theorem}
\newtheorem{proposition}{Proposition}
\usepackage{tikz}
\usetikzlibrary{decorations.pathreplacing}
\usepackage{wrapfig}
\usepackage{graphicx}
\usepackage{subcaption}
\usepackage{pgf}
\usepackage{DejaVuSans}
\usepackage[breakable]{tcolorbox}
\usepackage{listings}
\usepackage{upquote}

\usepackage{hyperref}
\usepackage{amsmath}
\usepackage{etoolbox}

\newcommand{\del}[1]{\textcolor{gray}{#1}} % deleted prose, one wrapper per paragraph: reject = drop the wrapper
\AtBeginDocument{%
  \NewCommandCopy{\deloriginalref}{\ref}%
  \NewCommandCopy{\deloriginalsection}{\section}%
  \NewCommandCopy{\deloriginalsubsection}{\subsection}%
  \NewCommandCopy{\deloriginalsubsubsection}{\subsubsection}%
}
\newif\ifdelreferences
\newcommand{\vocab}{\mathcal{V}}

\newcommand{\pt}{\tilde{p}}

\title{Speculative Diffusion is a Tale of \\Two Tradeoffs}

\author{
  Aaron Zorel$^\ast$,
  Guy Van den Broeck$^\dagger$,
  Aditya Grover$^\dagger$,
  Daniel Israel$^\dagger$ \\
  $^\ast$Independent Researcher \quad $^\dagger$University of California, Los Angeles
}
\iclrfinalcopy % Uncomment for camera-ready version, but NOT for submission.

\usepackage[T1]{fontenc}
\NewDocumentCommand{\codex}{+m}{%
  \begingroup\color{blue}%
  #1%
  \endgroup}
\NewDocumentEnvironment{codextheorembox}{+b}{%
  \par\addvspace{10pt}\noindent
  \begingroup
  \setlength{\fboxsep}{8pt}%
  \setlength{\fboxrule}{0.5pt}%
  \codex{\fcolorbox{blue!30!gray}{blue!2}{%
    \begin{minipage}{\dimexpr\linewidth-2\fboxsep-2\fboxrule\relax}
      \color{black}#1%
    \end{minipage}}}%
  \endgroup\par\addvspace{10pt}%
}{}
\begin{document}

\maketitle
\begin{abstract}
Speculative decoding is a technique that accelerates the inference speed of autoregressive large language models (LLMs) by sampling from a faster, lower quality draft model.
Traditionally, a fundamental characteristic of speculative decoding is that it is lossless with respect to the verifier distribution, guaranteeing speed without sacrificing quality.
This differs from recently developed diffusion LLMs (dLLMs), which are notably able to achieve substantial speedups by degrading the quality of their outputs with less diffusion effort.
Because diffusion models are able to sample faster than autoregressive models by generating tokens in parallel, they are considered strong candidates for draft models in speculative diffusion.
While it is widely understood that diffusion models can sacrifice quality for speed, speculative decoding itself can also be generalized to generate lossy samples faster using more lenient acceptance criteria with respect to the verifier distribution.
Presented with the two parameters of diffusion effort and speculative decoding leniency, 
this paper explores the full achievable Pareto frontier via joint variations of these parameters. 
Our experiments show that jointly varying diffusion effort and acceptance leniency can extend the speed–quality Pareto frontier over single step diffusion drafting.
\end{abstract}

\section{Introduction}

% Codex proposed deletion; remove this delblock pair to restore the prior source.

Autoregressive (AR) large language models (LLMs) generate its output token by token, as the distribution of the next token over the vocabulary depends on all the previously generated tokens~\citep{vaswani2017attention}.
Because each sequential decoding step requires transferring all model parameters and the growing key-value (KV) cache for a single token, inference is severely memory-bandwidth bound ~\citep{yuan2024llm}.
Thus, sequential dependency is a major bottleneck on the inference speed of AR decoding~\citep{stern2018blockwise,leviathan2023fast}.

In light of this problem, speculative decoding was invented to speed up the inference of AR LLMs while preserving the original output distribution~\citep{leviathan2023fast,chen2023accelerating}. It employs a much faster model as a drafter~\citep{xia2024unlocking} to generate a short candidate sequence during each drafting round. The original LLM acts as a verifier and processes these proposed tokens in parallel via rejection sampling, accepting an initial prefix while strictly guaranteeing that the sampled distribution matches that of the verifier~\citep{leviathan2023fast,chen2023accelerating}.
It is known that the rejection probability at a given position is the total variation (TV) distance between the drafter distribution and the verifier distribution at that position~\citep{leviathan2023fast,chen2023accelerating,yin2024theoretical}. Therefore, a drafter more aligned with the verifier will have a higher expected acceptance rate across the sequence and a longer accepted prefix in each drafting round. Because only one verifier call is made during each drafting round, a longer accept length will lead to a higher throughput in output tokens per unit time. Because the resulting sequence is provably drawn from the verifier model's output distribution, traditional speculative decoding is \textit{lossless}, providing an exact distributional guarantee that yields acceleration without degrading generation quality.

\begin{figure}[t]
\centering
\codex{\includegraphics[width=\linewidth]{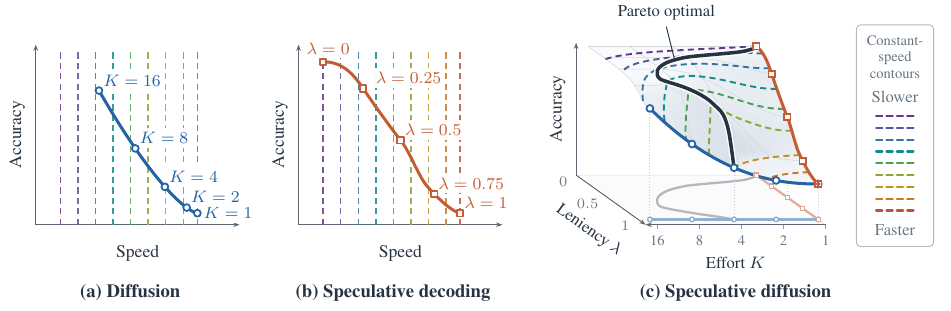}}
\caption{\textbf{A conceptual overview.} Diffusion has a speed-quality tradeoff parameterized by diffusion effort $K$, while speculative decoding has a speed-quality tradeoff parameterized by acceptance leniency $\lambda$. This work argues that speculative diffusion exposes both tradeoffs and therefore reveals optimal configurations that could not be found by varying only a single parameter.}
\label{fig:two-tradeoffs}
\end{figure}

While this exact distributional guarantee ensures that model quality is fully preserved, lossless decoding limits the maximum achievable speedup by how closely the drafter can approximate the verifier. Various methods explore \textit{lossy} speculative decoding, deliberately relaxing the exact distributional guarantee to trade off output quality for faster inference. For example, \citet{leviathan2023fast} experimented with a lenience parameter that increases the acceptance rate of draft tokens by a predefined ratio. 
Cactus~\citep{hao2026cactus} formalizes a relaxation by constraining the distributional divergence (such as Kullback-Leibler divergence) between the drafter and verifier to remain beneath a specified threshold. These lossy formulations provide a mechanism to control the trade-off between output quality and speed.

In parallel with developments in speculative decoding, diffusion large language models (dLLMs) have emerged as an alternative to autoregressive models. Rather than decoding from left to right, dLLMs iteratively denoise a noise corrupted sequence over multiple steps. Diffusion language models can be categorized by their noising process into masked diffusion~\citep{sahoo2024simple,nie2025large,ye2025dream} and uniform diffusion~\citep{austin2021structured,sahoo2025diffusion,diffusiongemma2026}. By updating all token positions in parallel during each denoising step, diffusion models avoid the sequentiality demanded by autoregressive decoding. Crucially, diffusion language models possess an intrinsic speed-quality trade-off: decreasing the number of denoising steps increases speed at the cost of quality.

More recently, an increasing number of diffusion LLMs have been used in speculative decoding. Due to their ability to generate multiple tokens simultaneously, diffusion LLMs serve as natural candidates as draft models. We refer to speculative decoding with a diffusion drafter as \textit{speculative diffusion}~\citep{christopher-etal-2025-speculative,liu2026tidar,chen2026dflash,fu2026nemotron, sandler2026specdiff}.  Speculative diffusion presents two intertwined trade-offs: the drafter trades draft quality for speed via the number of denoising steps, while the verifier can trade quality for speed via lossy acceptance. Mapping out this joint optimization space reveals how both parameters interact to yield the final Pareto frontier between generation speed and output quality (Figure~\ref{fig:two-tradeoffs}).

This paper makes the following contributions to speculative diffusion:

\begin{enumerate}[label=(\roman*), widest=ii, itemsep=2pt, topsep=2pt]
  \item \textbf{Theoretical.} We prove that, for a fixed proposal distribution, linear interpolation between the verifier and draft distributions maximizes token acceptance under a total variation bound on deviation from the verifier (Theorem~\ref{thm:tv-optimal}). We also establish conditions under which rejection sampling preserves the verifier distribution for multistep diffusion drafts (Proposition~\ref{prop:distribution-preserving}). We then provide a theoretical throughput model that explains the interaction between speed and quality in speculative diffusion (Section~\ref{sec:throughput}).
  
  \item \textbf{Empirical.} Across masked and uniform diffusion drafters on reasoning and coding tasks, we show that different combinations of denoising effort and acceptance leniency contribute to the speed-accuracy Pareto frontier. In particular, multistep denoising under more lenient acceptance enables faster generation at comparable accuracy (Section~\ref{sec:experiments}). We show that joint variation over diffusion effort and acceptance leniency often extends the Pareto frontier of speculative diffusion.
\end{enumerate}

\section{Background}
The contributions of this work depend on diffusion language models and lossy speculative decoding. We first provide background on these methods.
\label{sec:prelim}

% \subsection{Diffusion Language Models}
% \label{sec:prelim-dlm}
\paragraph{Diffusion Language Models.}
\label{sec:prelim-dlm}
Diffusion language models iteratively generate text from noise. For a token vocabulary $\vocab$ and sequence of $B$ tokens $\mathbf{x}\in\vocab^B$, let $\mathbf{z}_t$ be its corrupted version at noise level $t\in[0,1]$. The forward noising process independently sets each $z_t^i$ to a noisy token with probability $t$. Masked diffusion sets a special mask symbol $z_t^i=\mathtt{[MASK]}$~\citep{austin2021structured,sahoo2024simple, shi2024simplified}, while uniform diffusion samples $z_t^i\sim\mathrm{Uniform}(\vocab)$~\citep{sahoo2025diffusion,von2025generalized}. The goal of diffusion language models is to learn a denoiser $f_\theta$ that can reverse the forward noise process. In this work, we generate one block of $B$ tokens at a time~\citep{arriola2025block}, conditioned on a fixed prefix $h$. The denoiser $f_\theta(\mathbf{z}_t\mid h)$ outputs a probability distribution over $|\vocab|$ over clean tokens at each block position in parallel. To generate a block, a dLLM starts from noise and iteratively denoises the block until termination. We refer to the number of diffusion steps $K$ as \textit{diffusion effort}. Similar to ``reasoning
effort'' in reasoning models~\citep{guo2025deepseek}, greater effort applies more computation to generate the text,
typically improving quality at the cost of inference speed.

% \subsection{Lossless Speculative Decoding}
% \label{sec:prelim-sd}
\paragraph{Lossless Speculative Decoding.}
\label{sec:prelim-sd}
Speculative decoding is a rejection sampling method to draw samples from a distribution $p$ using a proposal distribution $q$. Let $p(v\mid h)$ and $q(v\mid h)$ denote the verifier and draft distributions over a vocabulary $\vocab$, given a prefix $h$. Speculative decoding accepts a proposal $y\sim q(\cdot\mid h)$ with probability $\min\{1,p(y\mid h)/q(y\mid h)\}$, and upon rejection, samples a replacement with probability proportional to $[p(v\mid h)-q(v\mid h)]_+$, where $[a]_+=\max\{a,0\}$. This rule preserves the verifier distribution~\citep{leviathan2023fast,chen2023accelerating}. Let $\alpha(h)$ denote the probability of accepting a proposal given accepted prefix $h$. Under lossless verification,

\begin{equation}
\label{eq:codex-overlap}
\alpha(h)=\sum_{v\in\vocab}\min\{p(v\mid h),q(v\mid h)\}=1-D_{\mathrm{TV}}(p(\cdot\mid h),q(\cdot\mid h)).
\end{equation}

where $D_{\mathrm{TV}}$ defines the total variation distance between $p$ and $q$.
\begin{equation}
\label{eq:codex-tv}
D_{\mathrm{TV}}(p,q)=\frac12\sum_{v\in\vocab}|p(v)-q(v)|.
\end{equation}
Thus, the acceptance rate of lossless speculative decoding is determined by the alignment of the drafter $q$ and verifier $p$. In practice, the space of possible prefixes is too large to estimate $\alpha(h)$ for every prefix $h$. We therefore measure the expected acceptance rate over a distribution of prefixes $\mathcal{H}$. 
\begin{equation}
\label{eq:expected-acceptance}
\bar\alpha=\mathbb{E}_{h\sim \mathcal{H}}[\alpha(h)],
\end{equation}
which can be estimated with sampling.

% \subsection{Lossy Speculative Decoding}
% \label{sec:lossy}
\paragraph{Lossy Speculative Decoding.}
\label{sec:lossy}
Lossy speculative decoding relaxes the requirement of sampling exactly from $p$ to increase acceptance rate. The degree of relaxation is the \textit{acceptance leniency}, as introduced by \citet[Appendix~A.5]{leviathan2023fast}.  In particular, we consider linear interpolation between the verifier and draft distributions~\citep{fu2025collaborative,hao2026cactus,wang2026diversed} and characterize its optimality under a total variation constraint.

Suppose the target $p$ can be replaced by a modified target $\pt$ in the rejection-sampling rule of Section~\ref{sec:prelim-sd}. If $\pt(\cdot\mid h)$ is chosen before drawing $y\sim q(\cdot\mid h)$, rejection sampling can be performed using $\pt(\cdot\mid h)$ with an acceptance rate of $\alpha(h)=1-D_{\mathrm{TV}}(\pt(\cdot\mid h),q(\cdot\mid h))$. Observe that the distance between $\pt$ and $q$ determines acceptance, while the distance between $\pt$ and $p$ corresponds to the deviation from the original output distribution $p$. 
\begin{samepage}
Given a bound $\epsilon\geq0$, the best achievable acceptance can be characterized by a constrained optimization
\begin{equation}\label{eq:codex-tv-optimization}
\max_{\pt(\cdot\mid h)\in\Delta(\vocab)} 1 - D_{\mathrm{TV}}(\pt(\cdot\mid h),\, q(\cdot\mid h)) \quad \text{subject to} \quad D_{\mathrm{TV}}(\pt(\cdot\mid h),\, p(\cdot\mid h)) \leq \epsilon
\end{equation}
\end{samepage}

where $\Delta(\vocab)$ is the set of probability distributions on the vocabulary.

\begin{codextheorembox}
\begin{theorem}[Optimality of Linear Acceptance]
\label{thm:tv-optimal}
For a fixed prefix $h$ with $p(\cdot\mid h)\ne q(\cdot\mid h)$ and a TV budget $0\leq\epsilon\leq D_{\mathrm{TV}}(p(\cdot\mid h),q(\cdot\mid h))$, an optimal target in \eqref{eq:codex-tv-optimization} is
\begin{equation}
\label{eq:tv-optimal-mixture}
\pt^\star(v\mid h)=(1-\lambda)p(v\mid h)+\lambda q(v\mid h),
\qquad
\lambda=\frac{\epsilon}{D_{\mathrm{TV}}(p(\cdot\mid h),q(\cdot\mid h))}
\end{equation}
and achieves maximum acceptance probability $\alpha^\star(h)=1-D_{\mathrm{TV}}(p(\cdot\mid h),q(\cdot\mid h))+\epsilon$.
\end{theorem}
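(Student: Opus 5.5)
The plan is to prove a matching upper bound for every feasible target via the triangle inequality for total variation, and then show that the mixture $\pt^\star$ attains it. Write $D=D_{\mathrm{TV}}(p(\cdot\mid h),q(\cdot\mid h))>0$, which is positive because $p\ne q$. We suppress the prefix $h$ throughout.

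\textbf{Upper bound.} Let $\pt\in\Delta(\vocab)$ be any feasible target, so $D_{\mathrm{TV}}(\pt,p)\le\epsilon$. Since $D_{\mathrm{TV}}$ is half the $\ell_1$ norm, it is a metric on $\Delta(\vocab)$. The triangle inequality gives $D\le D_{\mathrm{TV}}(p,\pt)+D_{\mathrm{TV}}(\pt,q)$. Hence $D_{\mathrm{TV}}(\pt,q)\ge D-\epsilon$, and the objective in \eqref{eq:codex-tv-optimization} is at most $1-D+\epsilon$. The hypothesis $\epsilon\le D$ ensures this bound is at most $1$, so it is a meaningful acceptance probability.

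\textbf{Achievability.} Set $\lambda=\epsilon/D$. The hypothesis $0\le\epsilon\le D$ gives $\lambda\in[0,1]$, so $\pt^\star=(1-\lambda)p+\lambda q$ is a convex combination and lies in $\Delta(\vocab)$. We have $\pt^\star-p=\lambda(q-p)$ and $\pt^\star-q=(1-\lambda)(p-q)$. By absolute homogeneity of the $\ell_1$ norm,
\begin{equation*}
D_{\mathrm{TV}}(\pt^\star,p)=\lambda D=\epsilon,\qquad D_{\mathrm{TV}}(\pt^\star,q)=(1-\lambda)D=D-\epsilon .
\end{equation*}
So $\pt^\star$ is feasible, and its acceptance probability $1-D_{\mathrm{TV}}(\pt^\star,q)$ equals $1-D+\epsilon$. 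This matches the upper bound, so $\pt^\star$ is optimal and $\alpha^\star(h)=1-D+\epsilon$.

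\textbf{Link to acceptance.} Finally, we tie the objective to the actual acceptance probability of rejection sampling with target $\pt^\star$. By \eqref{eq:codex-overlap} with $p$ replaced by $\pt^\star$, as the paper notes is valid when $\pt$ is fixed before drawing $y$, the acceptance probability is exactly $1-D_{\mathrm{TV}}(\pt^\star,q)$.

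None of these steps is a genuine obstacle. The only points needing care are the edge cases. The case $p\ne q$ is what keeps $\lambda$ well defined. The case $\epsilon\le D$ is what keeps $\lambda\le1$; beyond it one would instead take $\pt=q$ and obtain acceptance $1$. I would note explicitly that the optimum need not be unique, since any point on a TV geodesic from $p$ to $q$ attains the bound, which is why the statement says ``an optimal target''.
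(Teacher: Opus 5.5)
Your proposal is correct and follows the paper's proof in Appendix~\ref{app:tv-ball-proof} essentially step for step. The triangle inequality bounds every feasible target's acceptance by $1-D_{\mathrm{TV}}(p,q)+\epsilon$, and absolute homogeneity shows that the mixture with $\lambda=\epsilon/D_{\mathrm{TV}}(p,q)\in[0,1]$ is feasible and attains that bound; your remarks on the edge cases and on non-uniqueness are accurate additions rather than a different route.
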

\end{codextheorembox}

\begin{figure}[t]
\centering
\captionsetup[subfigure]{font=small,justification=centering}
% Keep the replaced placeholder and caption in the source for review.
\renewcommand{\del}[1]{}%
\begin{subfigure}[b]{0.48\linewidth}
\centering
\codex{\resizebox{\linewidth}{!}{%% Schematic for Theorem \ref{thm:tv-optimal}; include inside \codex.
%% p is the verifier, q the draft, and the TV budget is centered at p.
%% The circle is an abstract constraint-set glyph, not a Euclidean embedding
%% of a TV ball. No uniqueness is claimed for the displayed optimum.
%% The displayed case is 0 < epsilon < D_TV(p,q).
%% Bounding-box aspect ratio: 4.35 / 7.45, shared with the placeholder panel.
\begingroup
\definecolor{TVBlue}{HTML}{2563A6}
\definecolor{TVInk}{HTML}{263442}
\definecolor{TVMuted}{HTML}{697683}
\definecolor{TVOptimal}{HTML}{138D88}
\begin{tikzpicture}[
  font=\fontsize{10}{12}\selectfont,text=TVInk,
  line cap=round,line join=round,
]
  \coordinate (tv-p) at (0,0);
  \coordinate (tv-q) at (4.90,0);
  \coordinate (tv-optimal) at (2,0);
  \coordinate (tv-other) at (1.2,1.6);

  \filldraw[fill=TVBlue!5,draw=TVBlue!65,dashed,line width=0.85pt]
    (tv-p) circle (2);

  % Compare a feasible target with the mixture on the p--q segment.
  \draw[TVMuted!45,densely dashed,line width=0.65pt] (tv-p) -- (tv-other);
  \draw[TVMuted!75,densely dotted,line width=0.8pt]
    (tv-other) -- (tv-q)
    node[pos=0.55,sloped,above=4pt,text=TVMuted,font=\fontsize{9}{11}\selectfont]
      {$D_{\mathrm{TV}}(\tilde p,q)$};
  \draw[TVBlue,line width=1.35pt]
    (tv-p) -- (tv-optimal)
    node[midway,below=5pt,text=TVBlue] {$\epsilon$};
  \draw[TVInk,line width=1.35pt]
    (tv-optimal) -- (tv-q);
  \draw[TVInk!80,line width=0.65pt,decorate,
    decoration={brace,mirror,amplitude=4pt}]
    (0,-0.60) -- (4.90,-0.60)
    node[midway,below=7pt,font=\fontsize{9}{11}\selectfont]
      {$D_{\mathrm{TV}}(p,q)$};

  \fill[TVBlue] (tv-p) circle (2.3pt)
    node[above left=3pt,text=TVBlue] {$p$};
  \fill[TVInk] (tv-q) circle (2.3pt)
    node[above right=3pt] {$q$};
  \filldraw[fill=TVOptimal,draw=white,line width=0.75pt]
    (tv-optimal) circle (3pt)
    node[above right=4pt,text=TVOptimal] {$\tilde p^\star$};
  \filldraw[fill=white,draw=TVMuted,line width=0.8pt]
    (tv-other) circle (2.3pt)
    node[above=4pt,text=TVMuted] {$\tilde p$};

  \node[text=TVBlue,font=\fontsize{9}{11}\selectfont] at (0,-1.25)
    {$D_{\mathrm{TV}}(\tilde p,p)\le\epsilon$};
  \pgfresetboundingbox
  \path[use as bounding box] (-2.15,-2.15) rectangle (5.30,2.20);
\end{tikzpicture}%
\endgroup}}
\caption{Optimal relaxed target distribution}
\label{fig:tv-ball}
\end{subfigure}\hfill
\begin{subfigure}[b]{0.48\linewidth}
\centering
\includegraphics[width=\linewidth]{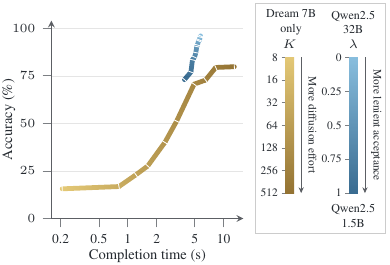}
\caption{Motivating example}
\label{fig:ar_vs_dream}
\end{subfigure}

\caption{
\textbf{Illustrative examples.}
Left: Visualization of the proof of Theorem~\ref{thm:tv-optimal} shows that the linear interpolated distribution $\pt^\star$ is the optimal solution to maximize drafter $q$ and verifier $p$ alignment while staying within a constraint on deviation from the verifier $\epsilon$. The solution to this constrained optimization is $\pt$.
Right: A motivating example shows that the speed-quality frontiers independently induced by diffusion effort and acceptance lenience in speculative decoding can occupy separate regions of the Pareto frontier. We compare diffusion only with AR speculative decoding with leniency.}
\label{fig:tv-motivation}
\end{figure}

In the above theorem, $\lambda$ dictates the acceptance leniency by shifting the target toward the drafter. Lossless sampling from the verifier occurs at $\lambda=0$, while $\lambda=1$ samples from the draft distribution. Figure~\ref{fig:tv-ball} gives an intuitive visual proof of the theorem. Total variation distance is a distance metric, so geometrically the optimal path between two distributions will be a linear interpolation. Refer to Appendix~\ref{app:tv-ball-proof} or ~\ref{app:tv-ball-alt-proof} for the proof.

Note that this result characterizes the best acceptance obtainable by choosing a target distribution \textit{before} observing the proposed token. 
Another point of clarification in the above proof is that it is a statement on the next token distribution, which assumes conditioning on prefix $h$. These bounds are not trivially extendable to the joint distribution of $p$ and $q$ over multiple token positions.

Cactus~\citep{hao2026cactus} is another example that allows lossy acceptance. Unlike linear acceptance, Cactus controls leniency by modifying the verifier distribution \textit{after} observing a proposed token. It increases the probability of that token subject to a bound on the modified target's KL divergence, or any other $f$-divergence, from the verifier, with larger budgets permitting more lenient acceptance. This dependence on the sampled token distinguishes Cactus from the linear acceptance defined above, whose target is chosen before sampling.

\section{Speculative Diffusion}
\label{sec:method}
\label{sec:tradeoffs}

Because diffusion models can draft tokens in parallel, speculative diffusion is a promising approach to achieve fast and high quality LLM inference. As discussed previously, both diffusion and speculative decoding have inherent speed quality tradeoffs.

\subsection{A Motivating Example}
\label{sec:motivating-examples}

To motivate varying diffusion denoising effort and speculative decoding leniency jointly, it is useful to examine each parameter independently. This work is motivated by the following observations.

First, even with a large leniency under lossy speculative decoding and a very small AR drafter, the range of achievable drafting speeds is much narrower than that of in a dLLM when varying the diffusion effort.
For example, Figure~\ref{fig:ar_vs_dream} shows that ignoring the verifier and completely relying on the AR drafter distribution Qwen2.5-1.5B-Instruct~\citep{qwen2024qwen25} achieves a speedup, but does cannot capture the full frontier. Meanwhile, a diffusion model is capable of can running over 10$\times$ faster. This suggests dLLM drafters expose a new axis of variation on its drafting quality and speed in a way that traditional AR drafters cannot.

Second, even though varying diffusion effort with the dLLM alone can yield a speed and quality on the Pareto frontier, most dLLMs have significantly lower accuracies than those of the state-of-the-art AR LLMs. For example, in Figure~\ref{fig:ar_vs_dream} the accuracy of Dream 7B plateaus despite running a high number of diffusion steps. Therefore, a way to uplift the accuracy of the diffusion LLMs with an AR model as a guide would be desirable. Many recent works echo this sentiment~\citep{liu2025discrete, israel2026accelerating, hu2026flashdlm}.

With these observations, it is evident that the speed quality tradeoffs of diffusion and lossy speculative decoding are complementary. Diffusion alone can achieve higher speeds. Speculative decoding alone can achieve higher quality. Because each method is Pareto optimal independently, it is possible that combining the approaches can unlock further improvements in the Pareto frontier of speed and quality.

\subsection{Multi-Step Diffusion Drafting}
\label{sec:multi-step-diffusion}

Many instances of speculative diffusion focus on single step diffusion drafting \cite{fu2026nemotron,sahoo2026unlocking, chen2026dflash, cheng2026dspark}. In this work, we use a diffusion drafter for multiple steps per verifier call, as shown in Algorithm~\ref{alg:vgdllm}. While diffusion effort can be parameterized directly in the number of steps $K$ (as mentioned in Section~\ref{sec:prelim-dlm}), other criteria can also be used as parametrization to control the diffusion effort. For example, confidence thresholds \citep{nie2026large} or entropy bounds~\citep{ben2026accelerated} can indirectly affect the number of diffusion steps.

\begin{algorithm}[!t]
\caption{Speculative diffusion}
\label{alg:vgdllm}
\algtext*{EndFor}
\algrenewcommand{\algorithmicindent}{2em}
\algdef{SE}[DOWHILE]{Do}{DoWhile}{\algorithmicdo}[1]{\algorithmicwhile\ #1}
\begin{algorithmic}[1]
\Require prompt $h$; block size $B$; denoising steps $K$;
\Statex denoiser\,$f_\theta$; verifier $p$; target rule \textsc{RelaxTarget}
\While{$\texttt{EOS}\notin h$}
  \State $(y_{1:B},q_{1:B})\gets\Call{DiffusionDraft}{\,f_\theta,h,B,K}$
  \State $p_{1:B+1}\gets\Call{VerifierForward}{p,h,y_{1:B}}$
  % Finish the preceding State before the deletion markup to preserve indentation.
  \csname ALG@closeloops\endcsname
  \State $\pt_{1:B}\gets\Call{RelaxTarget}{p_{1:B},q_{1:B},y_{1:B}}$
  % Finish the preceding State before the deletion markup to preserve indentation.
  \csname ALG@closeloops\endcsname
  \State $k\gets 0$
  \Do
    \State $k\gets k+1$
    \State $u\sim\mathcal{U}(0,1)$
  \DoWhile{$k\leq B$ \textbf{and} $u\leq\pt_k(y_k)/q_k(y_k)$}
  \State $\displaystyle y_k\sim
    \begin{cases}
      \Call{Normalize}{[\pt_k-q_k]_+},&k\leq B,\\[2pt]
      p_k,&k=B+1.
    \end{cases}$
  \State $h\gets \Call{Append}{h,y_{1:k}}$
\EndWhile
\State \Return $\Call{truncate}{h}$
\end{algorithmic}

\end{algorithm}

In speculative decoding with an autoregressive drafter, when sampling token at position $i$ it is conditioned on tokens prior to position $i$ that have been accepted by the verifier. Multi-step diffusion drafting does not maintain this property. A final sampled token can be obtained through many possible denoising trajectories. Evaluating the marginal probability at a position would require summing the probability over an intractable number of trajectories or approximating via an evidence lower bound (ELBO)~\citep{zhao2025unified}. Therefore, it is not immediately clear what proposal probabilities should be used in rejection sampling. 

Since the diffusion trajectory is stochastic, the proposal distribution itself can be seen as a random vector over the probability simplex $\Delta(\vocab)$. Proposition~\ref{prop:distribution-preserving} shows that without knowing the underlying distribution of the proposal distribution random vector, rejection sampling can still be performed on the proposal distribution that generated each token, maintaining exact samples from the verifier. In the context of speculative diffusion, this means the final step probabilities (e.g. final transition probabilities in uniform diffusion or marginals in masked diffusion) can be used as the proposal probabilities in rejection sampling. 

\subsection{A Simple Throughput Model}
\label{sec:throughput}

\begin{figure}[!t]
\centering
% Preserve the replaced four-panel source and caption for review.
\renewcommand{\del}[1]{}
\resizebox{\linewidth}{!}{\input{figures/tvd_profiling_panels.tex}}
\captionsetup{font=small,skip=4pt}
\caption{\textbf{A model of speculative diffusion.} Theoretical and measured tradeoffs for Dream-7B drafting for Qwen2.5-32B on GSM8K. (a)
The theoretical acceptance rate (speed) and verifier agreement (quality) under linear acceptance for each of $K$ diffusion steps per round. (b) Empirically measured acceptance rate for each $K$. (c) A transformation from acceptance rate to throughput according to a simple model of throughput. (d) Empirically measured throughput.}
\label{fig:gsm8k_scaling_law}
\end{figure}

A theoretical model can aid our understanding in how speed and generation quality vary as a function of the diffusion effort and leniency. To this end, the expected acceptance rate $\bar\alpha = 1-D_{\mathrm{TV}}(\pt, q)$ can be used to model throughput and the similarity between the verifier distribution and the modified target distribution $1-D_{\mathrm{TV}}(\pt, p)$ can be used as a proxy for generation quality, where both distances can be derived from an empirically profiled $D_{\mathrm{TV}}(p, q)$ under Theorem~\ref{thm:tv-optimal} at different effort levels. For diffusion effort using $K$ steps per drafting round, let $\mathcal{H}_K$ be a fixed distribution of prefixes and $q_K$ the corresponding draft distribution. Averaging over the distribution of prefixes, we define $d_K=\mathbb{E}_{h\sim \mathcal{H}_K}[D_{\mathrm{TV}}(p(\cdot\mid h),q_K(\cdot\mid h))]$, which is the TV distance between the verifier and drafter at effort $K$ with zero leniency. Applying the linear acceptance rule in Theorem~\ref{thm:tv-optimal} over $H_K$ gives the expected acceptance rate and verifier distance as
\begin{equation}
\label{eq:theoretical_speed_quality}
\bar\alpha_K(\lambda)=1-(1-\lambda)d_K 
\qquad \text{ and } \qquad
1-\overline{D}_{\mathrm{TV}}(p,\pt)=1-\lambda d_K.
\end{equation}

Let $c_d(K)$ and $c_v$ denote the time it takes to call the drafter and verifier per drafting round respectively. Empirically, $c_d(K)\approx a+bK$ and $c_v$ is constant.
Proposition~\ref{prop:expected-tokens-per-round} in Appendix~\ref{app:expected-tokens-per-round} then gives the theoretical throughput as
\begin{equation}
\label{eq:throughput_model}
\mathcal{T}(K,\bar\alpha)=
\frac{\sum_{j=0}^{B}\bar\alpha^{\,j}}{c_d(K)+c_v}.
\end{equation}

Figures~\ref{fig:tvd-measured-acceptance} and~\ref{fig:tvd-measured-throughput} directly measure each quantity predicted in Figures~\ref{fig:tvd-theoretical-acceptance} and~\ref{fig:tvd-theoretical-throughput}, and the details for obtaining these results can be found in Appendix~\ref{app:cost-calibration}. The theoretical model captures two important qualitative features seen in the empirically measured results: (1) The tradeoff between acceptance rate and verifier agreement is approximately linear and (2) no denoising budget captures the full throughput-quality Pareto frontier. The quantitative discrepancy between theoretical and measured results is driven by multiple factors. First, the assumption of a universal acceptance rate $\bar\alpha$ is incorrect. Leniency changes which draft tokens are accepted and hence the prefixes on which subsequent predictions are conditioned. More formally, the prefix distribution $\mathcal{H}_K$ encountered during the empirical profiling differs from the prefix distribution at higher leniency, resulting in a different value of $d_K$. Another difference between theory and practice is that the theory computes acceptance over an entire block, but in practice throughput can be affected by EOS termination effectively reducing drafting block size.

This simple setup illustrates the qualitative features of speculative diffusion when varying diffusion effort and acceptance leniency. However, it is not obvious how quality measured by total variation translates to downstream metrics such as reasoning accuracy. Furthermore, the linear interpolation acceptance criterion enables simpler modeling of acceptance rate, but other acceptance criteria may not behave the same way. These questions will be addressed in the following experimental section.

\section{Experiments}
\label{sec:experiments}
\begin{figure}[!t]
\centering
\codex{\includegraphics[width=\linewidth]{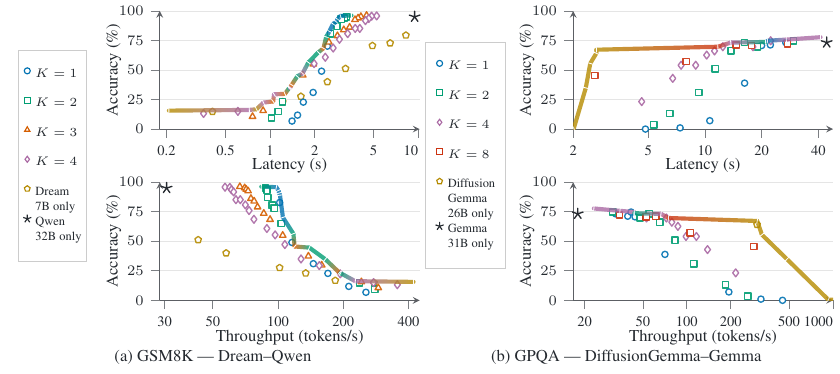}}
\captionsetup{font=small,skip=4pt}
\caption{\textbf{Examples of the Pareto frontier.} We provide examples of speculative diffusion for both masked diffusion (left) and uniform diffusion (right). Each point signifies a configuration of acceptance leniency in verification and diffusion effort ($K$ steps) in drafting. We observe the same qualitative trend that the speed-quality Pareto frontier is occupied by many different settings of diffusion effort which require different levels of lenience.}
\label{fig:joint-frontier}
\end{figure}
The goal of our experiments is to empirically explore the two tradeoffs of speculative diffusion: diffusion effort and leniency. We first establish a common experimental setup, then conduct experiments in two different settings.
\subsection{Experimental Setup
\label{sec:experimental-setup}}

We first examine masked and uniform diffusion using the following model pairs: Dream-7B~\citep{ye2025dream} with Qwen2.5-32B and DiffusionGemma-26B-A4B~\citep{diffusiongemma2026} with Gemma4-31B~\citep{gemmateam2026gemma4}. We then compare several other recent speculative diffusion models under a common Qwen3-8B verifier. 

In general, we benchmark quality using GSM8K~\citep{cobbe2021training}, GPQA Diamond~\citep{rein2024gpqa}, and HumanEval~\citep{chen2021evaluating} for math, reasoning, and code. In our Dream-Qwen and DiffusionGemma-Gemma experiments, some model-benchmark combinations are excluded based on two criteria that depend on the standalone capabilities of the drafter and verifier. The first criterion is that if the standalone accuracies of the drafter and verifier models are nearly the same on a benchmark, then there is no benefit to running speculative decoding, and thus the combination will be excluded. The second criterion is that if the drafter model performs worse than random, then the combination will be excluded.

Table~\ref{tab:drafter-verifier-gaps} reports the baseline accuracies for both the standalone drafter and verifier of the Dream, SDAR, and DiffusionGemma pairs, on GSM8K, HumanEval, and GPQA Diamond. Based on these measurements, we evaluate Dream and SDAR on GSM8K and HumanEval, and DiffusionGemma on GPQA. An evaluation configuration consists of a benchmark, model pair, sampling configuration, diffusion effort, and acceptance leniency. The detailed experimental configurations can be found in the Appendix~\ref{app:experimental-config-degtail}.

\subsection{Joint Exploration Over Two Tradeoffs}
\label{sec:joint-exploration-results}

Figure~\ref{fig:joint-frontier} is obtained by fixing a denoising budget $K$ and varying the acceptance leniency. 
In this particular plot, we display the results using the modified Cactus acceptance criterion instead of linear acceptance to demonstrate the generality of our findings.
The corresponding linear-acceptance results are in Figures~\ref{fig:gsm8k_time_cactus_with_ar} and~\ref{fig:gpqa_gemma_time_convex} and are qualitatively similar.
See Appendix~\ref{app:codex-cactus} for details.

Every point in Figure~\ref{fig:joint-frontier} represents the speed and accuracy measured from a diffusion effort, acceptance leniency pair. We then visualize the Pareto frontier over all pairs for latency and throughput. Note that latency is wall-clock time to problem completion, while throughput is the number of generated tokens divided by the wall clock time to problem completion. Both are needed to quantify speed, because throughput can be over optimized by generating unnecessary tokens, yet latency improvements achieved entirely from shorter generation lengths is also not desirable. The Pareto frontier is visualized as a thick connected line, while hollow markers are measurements not on the frontier. For visual clarity, we omit the points on the Pareto frontier itself and instead mark them with white ticks on the line. The Pareto frontier's color identifies the diffusion effort $K$ in that region of the frontier. Between measurements of different colors on the frontier, we apply a color gradient. This makes it possible to see which denoising budgets contribute to the frontier. 

For both model pairs in Figure~\ref{fig:joint-frontier}, several values of $K$ contribute to the frontier. Varying leniency at a single diffusion effort configuration will fail to achieve Pareto-optimal results. In general, low diffusion effort with strict acceptance requirements tends to capture the higher accuracy regions of the Pareto frontier, while higher diffusion effort with more lenient acceptance captures the higher speed regions. This is natural because as previously observed, diffusion has higher speed but lower quality and autoregression has higher quality but lower speed. The observation that the Pareto frontier contains many configurations of diffusion effort and acceptance leniency persists across drafting block sizes. In Appendix~\ref{app:dream-block-size}, we ablate drafting block size and show that no single diffusion effort captures the full frontier.
\begin{figure}[!t]
\centering
\codex{\includegraphics[width=\linewidth]{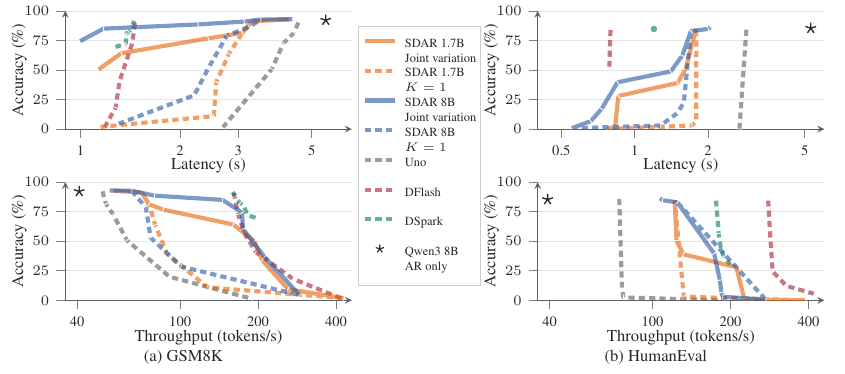}}
\captionsetup{font=small,skip=4pt}
\caption{\textbf{Comparison of Pareto frontiers.} For a single Qwen3 8B verifier, we benchmark several speculative diffusion methods. Each line corresponds to the achievable speed-quality frontier of a specific draft model. Dashed curves fix one diffusion step and vary leniency, while solid curves are varied over diffusion effort and acceptance lenience. For the SDAR models that support multi-step speculative diffusion, varying over the parameters jointly significantly extends the Pareto frontier.}
\label{fig:sdar-uno-frontier}
\end{figure}
\subsection{Comparing Drafters with a Shared Verifier}
\label{sec:shared-verifier-results}

While the previous experimental setup analyzes speculative diffusion for model pairs that were trained separately, many recent works train specifically for diffusion speculation. Under a Qwen3 8B~\citep{yang2025qwen3} verifier, these works include Uno~\citep{sahoo2026unlocking}, DFlash~\citep{chen2026dflash}, and DSpark~\citep{cheng2026dspark}. For this experiment we also include SDAR~\citep{cheng2026sdar}, which notably was not trained for the task of speculative decoding.

We compare SDAR-1.7B, SDAR-8B, Uno, DFlash, and DSpark on GSM8K and HumanEval with a shared Qwen3-8B verifier. For each model, we apply linear acceptance at $\lambda\in\{0,0.2,0.4,0.6,0.8,1\}$, from lossless verification to full acceptance. Uno, DFlash, and DSpark are trained specifically for one step diffusion drafting. DFlash and DSpark apply feature conditioning from the verifier, making multiple diffusion steps per verifier call impossible. Thus, for these models we only include their Pareto frontier using one step drafting and acceptance lenience. For Uno, we evaluate $K\in\{1,2,4\}$ in Appendix~\ref{app:uno-multistep}, but we find only $K=1$ one-step diffusion occupies its Pareto frontier. Because SDAR is trained as a standalone diffusion model with a multistep objective, we measure it's Pareto frontier by varying diffusion steps as well. We vary SDAR's adaptive denoising effort using a confidence threshold $\tau\in\{0.3,0.5,0.7,0.9\}$ and also evaluate fixed one-step ($K=1$) drafts. For each model size, we compare the joint frontier over all these settings with the frontier restricted to $K=1$. Appendix~\ref{app:shared-verifier-sweeps} shows all measured configurations. The results are shown in Figure~\ref{fig:sdar-uno-frontier}. Joint variation over diffusion effort and acceptance leniency substantially improves SDAR's speed-accuracy tradeoff. While the specialized one-step drafters remain strong, especially in throughput, joint variation places SDAR on the frontier despite not having specialized training and architecture for speculative diffusion. These results motivate training more specialized drafters for multi-step refinement.

\section{Related Work}

\paragraph{Diffusion drafters.}
Recent work on diffusion drafting for speculative decoding aims to generate proposals with fewer sequential model evaluations. SpecDiff introduced diffusion drafters and studied how denoising steps balance drafting cost against acceptance~\citep{christopher-etal-2025-speculative}. Subsequent work improves draft-verifier alignment through causal-consistency path search~\citep{li2025diffuspec} or conditioning diffusion drafts on target-model features~\citep{chen2026dflash}, while hybrid models integrate diffusion drafting and autoregressive verification within one model~\citep{liu2026tidar,fu2026nemotron, sahoo2026unlocking}. In general, diffusion drafting in speculative decoding rarely applies multiple steps of diffusion effort because it does not typically yield benefit ~\citep[Appendix~A.2]{sandler2026specdiff}. While quadratic decoding ~\citep{samragh2025your} applies multiple steps of diffusion for the purpose of tree-based verification ~\citep{miao2024specinfer}, tree-based speculative decoding is orthogonal to acceptance leniency and is beyond the scope of this work. 

\paragraph{Lossy speculative decoding.} Multiple works have introduced leniency into speculative decoding acceptance. It was first introduced in the seminal speculative decoding paper by \citet{leviathan2023fast}. Later theoretical work formalized the tradeoff as a linear relationship between rejection rates and total variation loss~\citep{yin2024theoretical}. In this paper, we spotlight the linear acceptance criterion which has been mentioned in several previous works under different names: collaborative decoding~\citep{fu2025collaborative}, linear interpolation~\citep{hao2026cactus}, and dynamic ensemble verification~\citep{wang2026diversed}. While the method is not original, to the best of our knowledge, our formulation is the first analyze it from the perspective of constrained optimization and show that it is optimal under a total variation constraint. Furthermore, these works do not investigate leniency in the context of speculative diffusion.
Nemotron-Labs-Diffusion briefly describes a verifier that interpolates AR and diffusion distributions~\citep[Appendix~C]{fu2026nemotron}, but it is mentioned as a possible direction without experimental evaluation. 

\section{Conclusion}
This work demonstrates that speculative decoding with diffusion drafters, also known as speculative diffusion, is governed by two complementary mechanisms that trade speed for quality. Specifically, these mechanisms are the denoising effort of the drafter and the acceptance leniency of the verifier. Our cost model and empirical findings show that neither parameter independently can capture the full Pareto frontier between generation speed and task accuracy. The speed quality trade-offs of speculative diffusion expand the design space for large language model serving.
A promising future direction is to improve query routing, where incoming requests are mapped to distinct configurations along the Pareto frontier according to their task difficulty and tolerance for error. Beyond routing at the query level, future systems could employ dynamic scheduling policies that adjust denoising budgets and verification thresholds at runtime. 

\clearpage

\subsection*{AI Usage Disclosure}
The authors used LLMs to assist in finding relevant papers, plotting results, polishing LaTeX, and drafting code.
We take responsibility for the final content of this work.
\subsubsection*{Acknowledgments}
This work was funded in part by the DARPA CLARA program under Agreement No. HR0011269E224, NSF CAREER \#2341040, Schmidt Sciences, and gifts from Cisco Research, Qualcomm, and Amazon.

\bibliography{iclr2027_conference}
\bibliographystyle{iclr2027_conference}

\appendix

\section{Proof of Theorem~\ref{thm:tv-optimal}}
\label{app:tv-ball-proof}

\begin{proof}
For any feasible $\pt(\cdot\mid h)$, the triangle inequality gives
\[
\begin{aligned}
D_{\mathrm{TV}}(\pt(\cdot\mid h),q(\cdot\mid h))
&\geq D_{\mathrm{TV}}(p(\cdot\mid h),q(\cdot\mid h))
-D_{\mathrm{TV}}(\pt(\cdot\mid h),p(\cdot\mid h))\\
&\geq D_{\mathrm{TV}}(p(\cdot\mid h),q(\cdot\mid h))-\epsilon.
\end{aligned}
\]
Hence, the acceptance probability is at most
\[
1-D_{\mathrm{TV}}(p(\cdot\mid h),q(\cdot\mid h))+\epsilon.
\]

We must now show that this bound is attained. Consider
\[
\pt^\star(\cdot\mid h)=(1-\lambda)p(\cdot\mid h)+\lambda q(\cdot\mid h),
\qquad
\lambda=\frac{\epsilon}{D_{\mathrm{TV}}(p(\cdot\mid h),q(\cdot\mid h))}.
\]
Under the theorem's assumptions, $\lambda\in[0,1]$. By the definition of total variation distance,
\[
\begin{aligned}
D_{\mathrm{TV}}(\pt^\star(\cdot\mid h),p(\cdot\mid h))
&=\frac12\sum_{v\in\vocab}
\left|\lambda\bigl(q(v\mid h)-p(v\mid h)\bigr)\right|\\
&=\lambda D_{\mathrm{TV}}(p(\cdot\mid h),q(\cdot\mid h))
=\epsilon,
\end{aligned}
\]
so $\pt^\star$ is feasible. Similarly,
\[
\begin{aligned}
D_{\mathrm{TV}}(\pt^\star(\cdot\mid h),q(\cdot\mid h))
&=\frac12\sum_{v\in\vocab}
\left|(1-\lambda)\bigl(p(v\mid h)-q(v\mid h)\bigr)\right|\\
&=(1-\lambda)D_{\mathrm{TV}}(p(\cdot\mid h),q(\cdot\mid h))\\
&=D_{\mathrm{TV}}(p(\cdot\mid h),q(\cdot\mid h))-\epsilon.
\end{aligned}
\]
Thus $\pt^\star$ attains the lower bound and therefore achieves acceptance rate
\[
1-D_{\mathrm{TV}}(p(\cdot\mid h),q(\cdot\mid h))+\epsilon.
\]
\end{proof}

\section{Alternative Proof of Theorem~\ref{thm:tv-optimal}}
\label{app:tv-ball-alt-proof}
This alternative proof helps to provide a more geometric understanding of the result.

Here we abbreviate $p(\cdot\mid h)$ as $p$ and $q(\cdot\mid h)$ as $q$.

View $p$ and $q$ as probability measures on a $\sigma$-algebra $\mathcal{F}$ on a set $\Omega$, and define
\[
  p_r \;:=\; p + r\,(q - p) \quad \text{for } r \in [0,\, 1].
\]

Since the total variation is a norm on the real vector space of finite signed measures,
denote it as $\|\cdot\|_{\mathrm{TV}}$, but we shall prove the statement for any norm $\|\cdot\|$.

\begin{proof}
We show that
\begin{equation}
\label{eq:theorem-alt-proof}
  \|p_r - q\| \;\leq\; \|\pt - q\|
  \quad \text{ any signed measure } \pt \text{ such that } \|\pt - p\| \leq \|p_r - p\|.
\end{equation}

Indeed, by the absolute homogeneity property of norms, we have
\[
  \|p_r - q\| \;=\; \|(1-r)\,(p - q)\| \;=\; (1-r)\,\|p - q\|
\]
\[
  \|p_r - p\| \;=\; \|r(q - p)\| \;=\; r\,\|p - q\|
\]

Now for any $\pt$ such that $\|\pt - p\| \leq \|p_r - p\| = r\,\|q - p\|$,
the triangle inequality implies that
\[
  \|\pt - q\| \;\geq\; \|q - p\| - \|\pt - p\| \;\geq\; \|q - p\| - r\,\|q - p\| \;=\; (1 - r)\,\|q - p\| \;=\; \|p_r - q\|.
\]

In words, this is saying that the signed measure $p_r$ is the closest point to $q$
among all points that are at most $\|p_r - p\|$ distance away from $p$.

Note that ~\eqref{eq:theorem-alt-proof} is true for any signed measure $\pt$,
so in particular it is true for any probability measure $\pt$.

Also note that it is easy to see that $p_r$ is a probability measure,
which means that the probability measure $p_r$ is the closest to $q$ among
all signed measures within the ball, and thus the closest among all
probability measures within the ball.

Now, with the choice of $r$ equal to $\lambda = \frac{\epsilon}{D_{\mathrm{TV}}(p(\cdot\mid h),q(\cdot\mid h))}$, it follows that the optimal solution to \eqref{eq:codex-tv-optimization} is $\pt^\star(v\mid h)=(1-\lambda)p(v\mid h)+\lambda q(v\mid h)$.
\end{proof}

\section{Expected Number of Generated Tokens per Drafting Round}
\label{app:expected-tokens-per-round}

\begin{proposition}
\label{prop:expected-tokens-per-round}
Assume there is a draft sequence of length $B$. Let $\mathbbold{1}_i$ be the indicator variable for whether token $i$ is accepted conditioned on all previous tokens being accepted, where $i \in [B]$.
Assume $\{\mathbbold{1}_i\}$ are independent and let $\mathbb{P}(\mathbbold{1}_i = 1) = P_i$, where $P_i$ is another set of i.i.d.\ random variables.
Let $\mu := \E P_i$.
Then the expected number of generated tokens is
\begin{equation}\label{eq:expected-tokens-per-round}
\sum_{n=0}^{B} \mu^n =
\begin{cases}
\dfrac{1 - \mu^{B+1}}{1 - \mu} & \text{for } \mu \neq 1 , \\[1.5ex]
1 + B & \text{for } \mu = 1 .
\end{cases}
\end{equation}
\end{proposition}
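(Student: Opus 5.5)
The plan is to write the number of generated tokens in a round as a sum of prefix indicators and apply linearity of expectation. Let $N$ be the number of accepted draft tokens. The round then emits $N+1$ tokens: the accepted prefix plus one token sampled from the adjusted residual if a rejection occurs, or from $p_{B+1}$ if all $B$ are accepted, as in Algorithm~\ref{alg:vgdllm}. Since $\mathbbold{1}_i$ indicates acceptance of token $i$ given that all earlier tokens were accepted, the event $\{N\ge n\}$ is exactly $\{\mathbbold{1}_1=\cdots=\mathbbold{1}_n=1\}$. Hence
\[
N+1=\sum_{n=0}^{B}\prod_{i=1}^{n}\mathbbold{1}_i ,
\]
where the empty product for $n=0$ equals $1$ and accounts for the bonus or replacement token.

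Next, I would compute $\E\prod_{i=1}^{n}\mathbbold{1}_i$ by conditioning on the random acceptance probabilities $P_1,\dots,P_B$. I read the independence hypothesis as conditional independence of the indicators given the $P_i$, with $\mathbb{P}(\mathbbold{1}_i=1\mid P_i)=P_i$. The tower property then gives
\[
\E\prod_{i=1}^{n}\mathbbold{1}_i=\E\prod_{i=1}^{n}P_i .
\]
Because the $P_i$ are i.i.d., this factors as $\prod_{i=1}^n\E P_i=\mu^n$. Summing over $n$ and using linearity yields $\E[N+1]=\sum_{n=0}^{B}\mu^n$.

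Finally, I would evaluate the geometric sum. For $\mu\ne1$, multiplying $S=\sum_{n=0}^B\mu^n$ by $(1-\mu)$ telescopes to $1-\mu^{B+1}$. For $\mu=1$, every term equals $1$, so $S=B+1$.

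The main obstacle is interpretive rather than computational. The statement calls the $\mathbbold{1}_i$ "independent" while their success probabilities $P_i$ are themselves random. Unconditional independence combined with $\mathbb{P}(\mathbbold{1}_i=1)=P_i$ only makes sense if we mean $\mathbb{P}(\mathbbold{1}_i=1)=\E P_i=\mu$, so I would state the conditional-independence reading explicitly. Under either reading the product expectation factors to $\mu^n$, and the rest of the argument goes through unchanged.
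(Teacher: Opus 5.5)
Your proposal is correct and takes essentially the same route as the paper: you write the token count as a sum of prefix products of indicators, condition on the $P_i$ via the tower property, and factor by the i.i.d.\ assumption to get $\mu^n$, with the extra correction/bonus token folded in as the empty $n=0$ product rather than added separately. Your conditional-independence reading is exactly what the paper's step $\E\bigl[\prod_{i=1}^n \mathbbold{1}_i \mid P_1,\dots,P_n\bigr]=\prod_{i=1}^n P_i$ tacitly assumes, and your telescoping argument supplies the geometric-sum closed form that the paper only states.
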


\begin{proof}
The expected number of accepted tokens for a draft sequence of length $B$ is
\begin{equation}\label{eq:expected-accepted}
A(B) = \E\big[\mathbbold{1}_1 + \mathbbold{1}_1 \mathbbold{1}_2 + \dots + \mathbbold{1}_1 \mathbbold{1}_2 \cdots \mathbbold{1}_B\big] = \sum_{n=1}^{B} \E \prod_{i=1}^{n} \mathbbold{1}_i .
\end{equation}

Since the indicator variables $\mathbbold{1}_i$ are independent, we have
\begin{equation}\label{eq:conditional-product}
\E\Big[\prod_{i=1}^{n} \mathbbold{1}_i \,\Big|\, P_1, \dots, P_n \Big] = \mathbb{P}(\mathbbold{1}_i = 1\ \forall i \le n) = \prod_{i=1}^{n} P_i ,
\end{equation}
and
\begin{align}
\E\Big[\prod_{i=1}^{n} \mathbbold{1}_i\Big]
&= \E\Big[\E\Big[\prod_{i=1}^{n} \mathbbold{1}_i \,\Big|\, P_1, \dots, P_n\Big]\Big] \quad \text{law of total expectation} \nonumber \\
&= \E \prod_{i=1}^{n} P_i \quad \text{by \eqref{eq:conditional-product}} \nonumber \\
&= \prod_{i=1}^{n} \E P_i \quad \text{as } \{P_i\} \text{ are i.i.d.} \nonumber \\
&= \mu^n \quad \text{as } \mu = \E P_i , \label{eq:mu-power}
\end{align}
Therefore,
\begin{equation}\label{eq:expected-accepted-sum}
A(B) = \sum_{n=1}^{B} \E \prod_{i=1}^{n} \mathbbold{1}_i = \sum_{n=1}^{B} \mu^n .
\end{equation}

Each drafting round also commits one more token, which is either the correction or the bonus token, so the expected number of generated tokens per drafting round is
\begin{equation}\label{eq:tokens-per-round-sum}
1 + A(B) = 1 + \sum_{n=1}^{B} \mu^n = \sum_{n=0}^{B} \mu^n .
\end{equation}

\end{proof}

\section{Rejection Sampling for Multi-Step Diffusion Drafting}
\begin{codextheorembox}
\begin{proposition}\label{prop:distribution-preserving}
Let $Q : \Omega \to \Delta(\vocab)$ be a random vector representing the final probability vector with which the draft token is sampled from.
$Q$ is of unknown distribution because the diffusion trajectory that produced the drafter probability vector is unknown. \\
Let $X : \Omega \to \vocab$ be a random variable that represents the outcome of performing rejection sampling with a fixed target distribution $\pt$ and the proposal distribution $Q$. Then
\begin{equation}
\label{eq:rs-marginal}
\mathbb{P}(X = v) = \pt(v) \quad \forall v \in \vocab
\end{equation}
\end{proposition}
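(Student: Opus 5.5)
The plan is to condition on the random proposal vector $Q$ and then average, using the tower property. Write $\mathcal{G}=\sigma(Q)$. The generative model is as follows. First $Q$ is drawn, with an arbitrary unknown law. Given $Q=q$, the draft token $Y$ is drawn from $q$. An independent $U\sim\mathcal{U}(0,1)$ is drawn, and we accept $Y$ if $U\le \pt(Y)/q(Y)$, in which case $X=Y$. Otherwise $X$ is drawn from $\textsc{Normalize}([\pt-q]_+)$. The key observation is that, conditionally on $Q=q$, this is exactly the single-step rejection rule of Section~\ref{sec:prelim-sd} with a \emph{fixed} proposal $q$ and fixed target $\pt$. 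Two facts make this work: the target $\pt$ does not depend on $Q$, and the acceptance ratio uses the same $q$ that generated $Y$.

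The main step is the conditional identity $\mathbb{P}(X=v\mid Q=q)=\pt(v)$ for every $q\in\Delta(\vocab)$ and $v\in\vocab$. This is the standard lossless speculative sampling computation. The probability of proposing and accepting $v$ is $q(v)\min\{1,\pt(v)/q(v)\}=\min\{q(v),\pt(v)\}$. The total rejection probability is $1-\sum_w\min\{q(w),\pt(w)\}=\sum_w[\pt(w)-q(w)]_+$, which equals $D_{\mathrm{TV}}(\pt,q)$ as in \eqref{eq:codex-overlap}. The replacement is drawn with probability $[\pt(v)-q(v)]_+/\sum_w[\pt(w)-q(w)]_+$. The normalizer cancels, so the total is
\[
\min\{q(v),\pt(v)\}+[\pt(v)-q(v)]_+=\pt(v).
\]
The degenerate case $q=\pt$ has rejection probability zero, so the normalization is never invoked and the identity holds trivially. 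Tokens with $q(v)=0$ are never proposed, so the ratio $\pt(v)/q(v)$ is never evaluated there.

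Finally, take expectations over $Q$:
\[
\mathbb{P}(X=v)=\mathbb{E}\bigl[\mathbb{P}(X=v\mid Q)\bigr]=\mathbb{E}[\pt(v)]=\pt(v).
\]
The conditional probability is the constant $\pt(v)$ regardless of the law of $Q$, so that law never enters. This is precisely why the final-step probabilities of any denoising trajectory can serve as proposal probabilities.

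I expect the main obstacle to be making the modeling assumptions precise, not the algebra. Three conditions must hold. First, $Y\mid Q \sim Q$, meaning the token is actually sampled from the reported final vector. Second, $U$ is independent of $(Q,Y)$. Third, $\pt$ is fixed, or more generally independent of $Q$ and $Y$. If $\pt$ were allowed to depend on $Q$, the same argument would still give $\mathbb{P}(X=v\mid \pt)=\pt(v)$, but not a fixed marginal. I would also require $Q$ to be measurable, so that the regular conditional distribution exists. This is automatic here because $\vocab$ is finite and $\Delta(\vocab)$ is a Borel subset of Euclidean space.
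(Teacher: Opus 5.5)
Your proposal is correct and follows the same route as the paper: condition on $Q$, use the rejection-sampling identity $\mathbb{P}(X=v\mid Q)=\pt(v)$, and average with the law of total expectation. The only difference is that you derive that conditional identity explicitly and state the sampling and independence assumptions, whereas the paper simply cites the identity from standard speculative-decoding theory.
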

\end{codextheorembox}

\begin{proof}[Proof of Proposition~\ref{prop:distribution-preserving}]
Based on rejection sampling theory~\citep{leviathan2023fast,chen2023accelerating}, conditioned on the actual probability $Q$ with which the drafted token is sampled from, we know that 
\begin{equation}
\label{eq:rs-conditional}
\mathbb{E}(\mathbbold{1}_{X = v} \mid Q) = \mathbb{P}(X = v \mid Q) = \pt(v) \quad \forall v \in \vocab.
\end{equation}

So by the law of total expectation, we have
\begin{equation*}
\mathbb{P}(X = v) = \mathbb{E}(\mathbbold{1}_{X = v}) = \mathbb{E}_q\!\left[\mathbb{E}(\mathbbold{1}_{X = v} \mid Q = q)\right]
= \mathbb{E}_q [\pt(v)] = \pt(v) \quad \forall v \in \vocab
\end{equation*}
Thus \eqref{eq:rs-marginal} holds.
\end{proof}

\section{Modified Cactus}
\label{app:codex-cactus}
\label{sec:other-divergence-choices}
Given a prefix $\mathbf{x}_{<t}$ and a drafted token $y$ with drafter probability $q(y\mid \mathbf{x}_{<t})$ and verifier probability $p(y\mid \mathbf{x}_{<t})$,
Cactus~\citep{hao2026cactus} creates a modified target probability distribution $\pt$ that depends on $y$ that maximizes $\pt(y \mid \mathbf{x}_{<t})$ subject to the constraint $D_{\mathrm{KL}}(\pt(\cdot\mid \mathbf{x}_{<t})\,\|\,p(\cdot\mid \mathbf{x}_{<t}))\leq\delta$. 
Specifically, it sets $\pt(y\mid \mathbf{x}_{<t})=\gamma$, where $\gamma$ is the solution to \eqref{eq:codex-cactus-kl}, and rescales the assigned probabilities for the rest of the tokens so that the overall vector remains a probability distribution, i.e., $\pt(v\mid \mathbf{x}_{<t})=\frac{1-\gamma}{1-p(y\mid \mathbf{x}_{<t})}\,p(v\mid \mathbf{x}_{<t})$ for $v\neq y$. For $0<p(y\mid \mathbf{x}_{<t})<1$,  reduces to
\begin{equation}
\label{eq:codex-cactus-kl}
\Phi(\gamma):=\gamma\log\frac{\gamma}{p(y\mid \mathbf{x}_{<t})}+(1-\gamma)\log\frac{1-\gamma}{1-p(y\mid \mathbf{x}_{<t})}\leq\delta.
\end{equation}
Cactus approximates $\Phi$ by its second-order Taylor expansion at $\gamma_0=p(y\mid \mathbf{x}_{<t})$,
\begin{equation}
\label{eq:codex-cactus-taylor}
\Phi(\gamma)\approx\Phi(\gamma_0)+\Phi'(\gamma_0)(\gamma-\gamma_0)+\frac{\Phi''(\gamma_0)}{2}(\gamma-\gamma_0)^2,
\end{equation}
and solves $\Phi(\gamma)=\delta$ under this approximation, which gives $\gamma=\min\{p(y\mid \mathbf{x}_{<t})+\sqrt{2\delta p(y\mid \mathbf{x}_{<t})(1-p(y\mid \mathbf{x}_{<t}))},\,1\}$. Cactus then accepts $y$ with probability $\min\{1,\gamma/q(y\mid \mathbf{x}_{<t})\}$.

However, the Taylor approximation is only justified when the desired modified probability for the drafted token is close to $\gamma_0=p(y\mid \mathbf{x}_{<t})$.

In order to support larger leniency $\delta$ values, when the verifier probability $\gamma_0=p(y\mid \mathbf{x}_{<t})$ is small and the drafter probability $q(y\mid \mathbf{x}_{<t})$ is large, the solution $\gamma^*$ needs to be much larger than $\gamma_0$ to become comparable to the drafter probability, which will break the Taylor approximation.

Therefore, in our experiments we use Newton's method to solve for the optimal $\gamma^*$ exactly for any given values of $\delta$, $p(y\mid \mathbf{x}_{<t})$, and $q(y\mid \mathbf{x}_{<t})$. On the GPU, the Newton steps can be run in parallel for the entire drafting block, so the runtime overhead is minimal.

\section{Experimental Details}
\label{app:experimental-details}
\codex{\label{app:experimental-config-degtail}}
All experiments were run on a single H200 GPU.

\begin{samepage}
% Source: drive/vgdllm-runs/drafter_verifier_gap_table.csv.
% Preserve the supplied rounding, including gaps computed before rounding accuracies.
\begin{table}[t]
\centering
\small
\setlength{\tabcolsep}{5pt}
\renewcommand{\arraystretch}{1.13}
\begin{tabular}{@{}p{3.45cm}lrrrr@{}}
\hline
Model pair & Benchmark & $n$ & Drafter (\%) & Verifier (\%) & Gap (pp)\\
\hline
\raggedright Dream 7B\newline$\to$ Qwen2.5 32B & GSM8K & 500 & $79.8\pm1.8$ & $95.0\pm1.0$ & $+15.2$\\
 & HumanEval & 164 & $57.3\pm3.9$ & $84.8\pm2.8$ & $+27.4$\\
 & GPQA Diamond & 198 & $22.7\pm3.0$ & $45.5\pm3.5$ & $+22.7$\\
\hline
\raggedright SDAR 8B\newline$\to$ Qwen3 8B & GSM8K & 500 & $85.0\pm1.6$ & $92.2\pm1.2$ & $+7.2$\\
 & HumanEval & 164 & $39.6\pm3.8$ & $85.4\pm2.8$ & $+45.7$\\
 & GPQA Diamond & 198 & $25.3\pm3.1$ & $43.4\pm3.5$ & $+18.2$\\
\hline
\raggedright DiffusionGemma 26B-A4B\newline$\to$ Gemma-4 31B & GSM8K & 500 & $96.4\pm0.8$ & $96.6\pm0.8$ & $+0.2$\\
 & HumanEval & 164 & $91.5\pm2.2$ & $93.3\pm2.0$ & $+1.8$\\
 & GPQA Diamond & 198 & $64.1\pm3.4$ & $73.2\pm3.1$ & $+9.1$\\
\hline
\end{tabular}
\caption{Baseline task accuracy with one standard error. The gap is verifier minus drafter accuracy in percentage points.}
\label{tab:drafter-verifier-gaps}
\end{table}

\subsection{Benchmark Selection}
\label{app:benchmark-selection}
\end{samepage}

The benchmarks use a random set of 500 GSM8K questions, all GPQA Diamond and all HumanEval problems. 
GSM8K and GPQA score the extracted final answer, while HumanEval reports accuracy from execution of the generated code against its tests using pass@1. We generate exactly one completion for all problems.

Table~\ref{tab:drafter-verifier-gaps} reports baseline accuracies used to select benchmarks. Dream uses 512 denoising steps on GSM8K and HumanEval and 1024 on GPQA. DiffusionGemma uses its native sampler with an upper limit of 48 diffusion steps. SDAR uses a confidence threshold of $\tau=0.9$. These settings assess drafter quality with high diffusion effort. We focus on settings where the drafter has useful task accuracy and the verifier offers room for improvement. Dream and SDAR meet these criteria on GSM8K and HumanEval, but perform worse than random on GPQA. DiffusionGemma has almost the same accuracy as the verifier model than on GSM8K and Humaneval so it is only run on GPQA Diamond. 

\subsection{Models, Data, and Sampling}
\label{app:run-settings}

Table~\ref{tab:model-checkpoints} shows the Huggingface checkpoints used for the experiments.
All models, both drafters and verifiers, are run without thinking enabled. All models have a generation limit of 2048 for GPQA Diamond and 512 on GSM8K and HumanEval.

\begin{table}[!htbp]
\centering
\small
\setlength{\tabcolsep}{5pt}
\renewcommand{\arraystretch}{1.2}
\begin{tabular}{@{}p{0.20\linewidth}p{0.52\linewidth}p{\dimexpr0.28\linewidth-4\tabcolsep\relax}@{}}
\hline
\textbf{Model / role} & \textbf{Hugging Face checkpoint} & \textbf{Reference}\\
\hline
\raggedright Dream drafter & \href{https://huggingface.co/Dream-org/Dream-v0-Instruct-7B}{\nolinkurl{Dream-org/Dream-v0-Instruct-7B}} & \raggedright\citet{ye2025dream}\tabularnewline
\raggedright Qwen2.5 verifier & \href{https://huggingface.co/Qwen/Qwen2.5-32B-Instruct}{\nolinkurl{Qwen/Qwen2.5-32B-Instruct}} & \raggedright\citet{qwen2024qwen25release}\tabularnewline
\raggedright DiffusionGemma drafter & \href{https://huggingface.co/google/diffusiongemma-26B-A4B-it}{\nolinkurl{google/diffusiongemma-26B-A4B-it}} & \raggedright\citet{diffusiongemma2026}\tabularnewline
\raggedright Gemma verifier & \href{https://huggingface.co/google/gemma-4-31B-it}{\nolinkurl{google/gemma-4-31B-it}} & \raggedright\citet{gemmateam2026gemma4}\tabularnewline
\raggedright Qwen3 verifier / Uno & \href{https://huggingface.co/s-sahoo/uno-qwen3-8B}{\nolinkurl{s-sahoo/uno-qwen3-8B}} & \raggedright\citet{sahoo2026unlocking}\tabularnewline
\raggedright SDAR 8B drafter & \href{https://huggingface.co/JetLM/SDAR-8B-Chat-b16}{\nolinkurl{JetLM/SDAR-8B-Chat-b16}} & \raggedright\citet{cheng2026sdar}\tabularnewline
\raggedright SDAR 1.7B drafter & \href{https://huggingface.co/JetLM/SDAR-1.7B-Chat-b16}{\nolinkurl{JetLM/SDAR-1.7B-Chat-b16}} & \raggedright\citet{cheng2026sdar}\tabularnewline
\raggedright DFlash drafter & \href{https://huggingface.co/z-lab/Qwen3-8B-DFlash-b16}{\nolinkurl{z-lab/Qwen3-8B-DFlash-b16}} & \raggedright\citet{chen2026dflash}\tabularnewline
\raggedright DSpark drafter & \href{https://huggingface.co/deepseek-ai/dspark_qwen3_8b_block7}{\nolinkurl{deepseek-ai/dspark_qwen3_8b_block7}} & \raggedright\citet{cheng2026dspark}\tabularnewline
\hline
\end{tabular}
\caption{Hugging Face checkpoints used in the experiments.}
\label{tab:model-checkpoints}
\end{table}

The Dream GSM8K experiment uses the model's chat template without an additional instruction prefix. On GSM8K, Gemma and the Qwen3 comparisons prepend the instruction to put the final answer inside \texttt{\textbackslash boxed\{\}}. Within each model-pair comparison, the AR baseline uses the corresponding prompt format.

\begin{table}[t]

\centering
\small
\setlength{\tabcolsep}{6pt}
\renewcommand{\arraystretch}{1.15}
\begin{tabular}{@{}lccc@{}}
\hline
Sampler & Temperature & Top-$p$ & Top-$k$\\
\hline
Dream draft / standalone & 0.2 & 0.95 & --\\
Qwen2.5-32B verifier & 0.7 & 0.8 & 20\\
DiffusionGemma draft / standalone & $0.8\to0.4$ & 0.95 & --\\
Gemma-4-31B verifier & 1.0 & 0.95 & 64\\
SDAR draft & 0.6 & 0.95 & 20\\
Uno draft & 1.0 & 0.95 & 50\\
DFlash draft & Greedy & -- & --\\
DSpark draft & 1.0 & -- & --\\
Qwen3-8B verifier / AR & 1.0 & 0.95 & 50\\
\hline
\end{tabular}
\caption{Sampling settings for the reported runs. A dash indicates no additional truncation recorded for that setting. DiffusionGemma anneals temperature during denoising and uses 0.4 for a single denoising step.}
\label{tab:sampling-settings}

\end{table}

Dream 7B uses entropy-based unmasking with deterministic position selection. Each drafting round denoises the remaining generation canvas.

DiffusionGemma uses a block size of 256. When the DiffusionGemma is run as a standalone model, it uses an entropy bound sampler which terminates the diffusion when a confidence threshold is reached, so the number of diffusion steps is variable, but is bounded by an upper limit. In our experiments, the values of 1, 2, 4, 8, 16, 32, and 48 are used as the upper bound. When DiffusionGemma is run as a drafter, we modify it to run a fixed number of diffusion steps ($K$) instead of using the original entropy bound sampler, because DiffusionGemma has both a stability threshold and a confidence threshold to determine when to terminate the diffusion, along with the entropy bound sampler threshold, which makes it not straightforward to control the diffusion effort with the original sampling approach.

\begin{samepage}
    
\begin{table}[t]
\centering
\small
\setlength{\tabcolsep}{3pt}
\renewcommand{\arraystretch}{1.16}
\begin{tabular}{@{}p{3.65cm}p{1.65cm}p{2.95cm}p{1.5cm}rr@{}}
\hline
Model / pair & Dataset & Diffusion effort & Acceptance & Block & \shortstack{Token\\limit}\\
\hline
\raggedright Dream 7B + Qwen2.5 32B & GSM8K & $K\in\{1,2,3,4\}$ & Cactus & $\leq512$ & 512\\
 & HumanEval & $K\in\{1,2,3,4\}$ & Cactus & $\leq512$ & 512\\
\raggedright DiffusionGemma 26B-A4B + Gemma-4 31B & GPQA & $K\in\{1,2,4,8\}$ & Cactus & 256 & 2048\\
 & GPQA & $K\in\{1,2,4,8\}$ & Linear & 256 & 2048\\
SDAR 8B & GSM8K,\newline HumanEval & $\tau=0.3,0.5,0.7,0.9$\newline or fixed $K=1$ & Linear & 16 & 512\\
SDAR 1.7B & GSM8K,\newline HumanEval & $\tau=0.3,0.5,0.7,0.9$\newline or fixed $K=1$ & Linear & 16 & 512\\
Uno & GSM8K,\newline HumanEval & $K=1$ & Linear & 16 & 512\\
DFlash & GSM8K,\newline HumanEval & One draft pass & Linear & 16 & 512\\
DSpark & GSM8K,\newline HumanEval & Fixed sampler & Linear & 7 & 512\\
\hline
\end{tabular}
\caption{Main experimental configurations. The final five rows share the Qwen3-8B verifier. Block lengths are the configured values, distinct from full-completion limits. DFlash's 16-position block includes one anchor and 15 draft positions. SDAR's confidence threshold $\tau$ controls denoising effort adaptively.}
\label{tab:main-experiment-settings}
\end{table}

\label{app:drafter-controls}SDAR commits tokens whose confidence exceeds $\tau$, so its number of denoising passes varies with the draft. We sweep $\tau\in\{0.3,0.5,0.7,0.9\}$ for both model sizes. We also run another configuration where the SDAR drafter runs exactly one diffusion step instead of relying on the confidence threshold for termination. All Qwen3 comparisons use $\lambda\in\{0,0.2,0.4,0.6,0.8,1\}$ with the same verifier model and sampling settings. Table~\ref{tab:main-experiment-settings} gives block sizes and completion limits. DSpark retains its native confidence threshold of zero.
\end{samepage}

Figure~\ref{fig:sdar-uno-frontier} uses one-pass Uno drafting. Appendix~\ref{app:uno-multistep} reports the additional $K=2,4$ experiments that support this choice. For DFlash and DSpark, we vary acceptance leniency while keeping their released drafting procedures.

\begin{samepage}

\subsection{Sweep Grids and Measurement Protocol}
\label{app:sweeps-and-timing}

Table~\ref{tab:cactus-grid} lists the Cactus budgets for the main results, along with the diffusion effort $K$. 

On the other hand, for the linear acceptance modifier, $\lambda\in\{0,0.1,\ldots,1\}$ is used, along with
$K\in\{1,2,3,4\}$ for the Dream pair on GSM8K,
$K\in\{1,2,3\}$ for the Dream pair on HumanEval,
$K\in\{1,2,4,8\}$ for DiffusionGemma pair on GPQA,
and $K\in\{1,2,4,6,10\}$ for DiffusionGemma pair on HumanEval.

\begin{table}[t]
\centering
\small
\setlength{\tabcolsep}{8pt}
\renewcommand{\arraystretch}{1.12}
\begin{tabular}{@{}lcl@{}}
\hline
Model pair / dataset & $K$ & Cactus budgets $\delta$\\
\hline
Dream--Qwen / GSM8K & 1 & $0,0.25,0.5,1,\ldots,8$\\
 & 2 & $0,0.25,0.5,1,\ldots,11$\\
 & 3 & $0,0.25,0.5,1,\ldots,13$\\
 & 4 & $0,0.25,0.5,1,\ldots,15$\\
DiffusionGemma--Gemma / GPQA & 1 & $0,0.25,0.5,1,\ldots,5$\\
 & 2 & $0,0.25,0.5,1,\ldots,6$\\
 & 4 & $0,0.25,0.5,1,\ldots,8,10$\\
 & 8 & $1,3,5,7,10,15$\\
\hline
Dream--Qwen / HumanEval & 1 & $0,0.25,0.5,1,\ldots,6$\\
 & 2 & $0,0.25,0.5,1,\ldots,12$\\
 & 3 & $0,0.25,0.5,1,\ldots,13$\\
 & 4 & $0,0.25,0.5,1,\ldots,15$\\
DiffusionGemma--Gemma / HumanEval & 1 & $0,0.25,0.5,1,\ldots,8$\\
 & 2 & $0,0.25,0.5,1,\ldots,12$\\
 & 4 & $0,0.25,0.5,1,\ldots,16$\\
 & 6 & $0,0.25,0.5,1,\ldots,15$\\
 & 10 & $0.25,0.5,1,\ldots,15,20$\\
\hline
\end{tabular}
\caption{Evaluated Cactus budgets at each diffusion effort.}
\label{tab:cactus-grid}
\end{table}

\end{samepage}

\subsection{Simple Throughput Model Calibration}
\label{app:cost-calibration}

The calibration for Figure~\ref{fig:gsm8k_scaling_law} uses $K\in\{1,2,3,4\}$ and runs 50 random questions from the GSM8K dataset. For each $K$, we compute the average TVD between the drafter and verifier distributions. We then substitute them into \eqref{eq:theoretical_speed_quality} and derive the theoretical throughput and quality at different leniency values. The block size $B$ is set to 512 in \eqref{eq:throughput_model}.

Based on empirical profiling of the Dream-7B drafter and Qwen2.5-32B verifier, we estimate $d_K$ to be $0.566,0.517,0.424,0.360$ for $K=1,2,3,4$ respectively. These distances imply lossless acceptance rates \(1-d_K\) of 0.434, 0.483, 0.576, and 0.640, which determine the starting points of the lines in Figure~\ref{fig:tvd-theoretical-acceptance}. Figure~\ref{fig:gsm8k-profiling-costs} shows the goodness of fit for the linear regression plot and the constant time for calling the verifier.

\begin{figure}[!htbp]
\centering
\includegraphics[width=\linewidth]{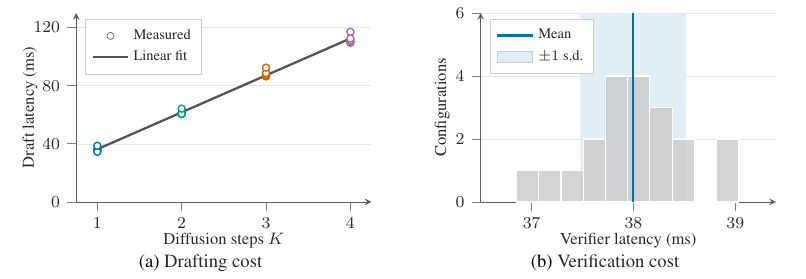}
\captionsetup{font=small,skip=4pt}
\caption{Per-round latency measurements supporting Figure~\ref{fig:gsm8k_scaling_law}. Left: drafting costs and their linear fit in $K$. Right: the distribution of verifier costs across configurations, with its mean and one-standard-deviation band.}
\label{fig:gsm8k-profiling-costs}
\end{figure}

\section{Additional Results}
\label{app:additional-downstream}

\subsection{Linear Acceptance}
\label{app:linear-acceptance-results}

Figures~\ref{fig:appendix-dream-gsm8k} and~\ref{fig:appendix-gemma-gpqa} show the linear-acceptance counterparts of Figure~\ref{fig:joint-frontier}, alongside Cactus for comparison.

\begin{figure}[!htbp]
\centering
\captionsetup[subfigure]{font=small,justification=centering,skip=2pt,hypcap=true}
\begin{subfigure}[t]{0.49\linewidth}
\centering
\includegraphics[page=1,width=\linewidth]{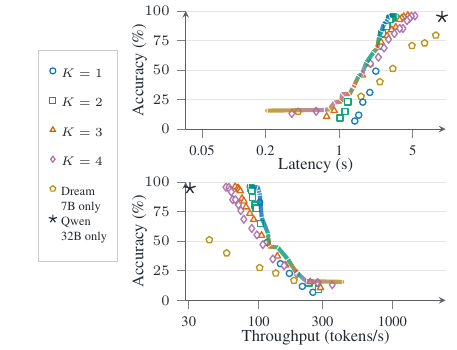}
\caption{Cactus}
\label{fig:gsm8k_time_cactus}
\end{subfigure}\hfill
\begin{subfigure}[t]{0.49\linewidth}
\centering
\includegraphics[page=2,width=\linewidth]{figures/appendix_dream_gsm8k_panels.pdf}
\caption{Linear acceptance}
\label{fig:gsm8k_time_cactus_with_ar}
\end{subfigure}
\captionsetup{font=small,skip=4pt}
\caption{Latency and throughput on GSM8K with the Dream-7B drafter and Qwen2.5-32B-Instruct verifier}
\label{fig:appendix-dream-gsm8k}
\end{figure}

\begin{figure}[!htbp]
\centering
\captionsetup[subfigure]{font=small,justification=centering,skip=2pt,hypcap=true}
\begin{subfigure}[t]{0.49\linewidth}
\centering
\includegraphics[page=1,width=\linewidth]{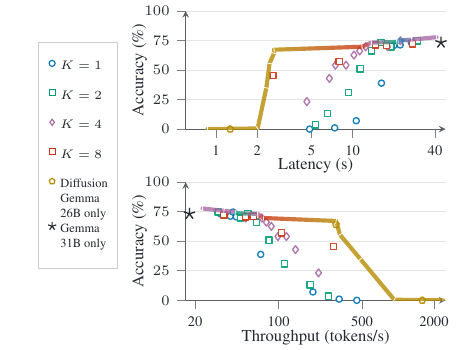}
\caption{Cactus}
\label{fig:gpqa_gemma_time_cactus}
\end{subfigure}\hfill
\begin{subfigure}[t]{0.49\linewidth}
\centering
\includegraphics[page=2,width=\linewidth]{figures/appendix_gemma_gpqa_panels.pdf}
\caption{Linear acceptance}
\label{fig:gpqa_gemma_time_convex}
\end{subfigure}
\captionsetup{font=small,skip=4pt}
\caption{Latency and throughput on GPQA Diamond with the DiffusionGemma 26B drafter and Gemma4 31B verifier.}
\label{fig:appendix-gemma-gpqa}
\end{figure}

\clearpage
\subsection{Drafting Block Size Ablation}
\label{app:dream-block-size}

We vary the drafting block size for Dream-7B with Qwen2.5-32B-Instruct on the 500 GSM8K questions, using linear acceptance and a 512-token generation limit. For $B\in\{32,64,128\}$, we sweep $K\in\{1,2,3,4\}$ and $\lambda\in\{0,0.2,0.4,0.6,0.8,1\}$. We compare these runs with the remaining-canvas configuration, denoted $B\leq512$ which means drafting over the remaining generation canvas. We compute a separate empirical Pareto frontier over all measured settings within each block-size configuration. Figure~\ref{fig:dream-block-size} shows that changing block size affects latency and throughput, while preserving the main observation of Section~\ref{sec:joint-exploration-results}. Different diffusion efforts contribute to different regions of the frontier.
\begin{figure}[!htbp]
\centering
\captionsetup[subfigure]{font=small,justification=centering,skip=2pt,hypcap=true}
\begin{subfigure}[t]{\linewidth}
\centering
\codex{\includegraphics[width=\linewidth]{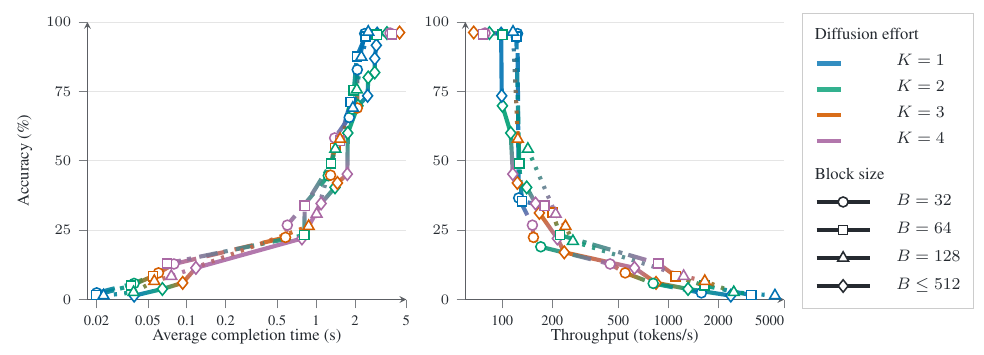}}
\caption{Absolute latency and throughput}
\label{fig:dream-block-size-absolute}
\end{subfigure}
\par\medskip
\begin{subfigure}[t]{\linewidth}
\centering
\codex{\includegraphics[width=\linewidth]{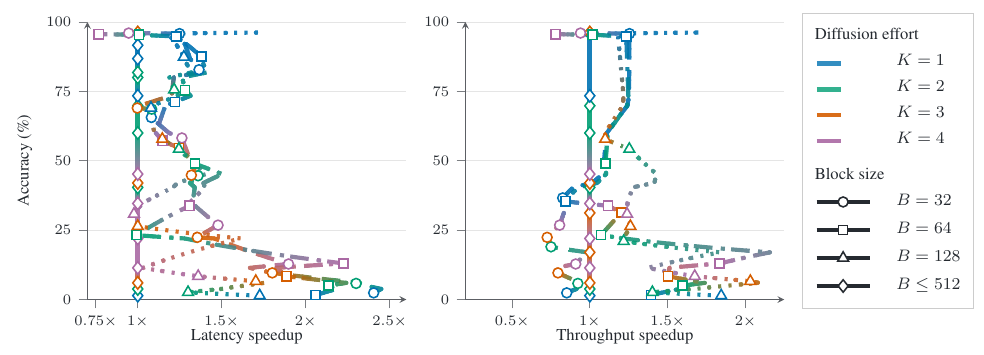}}
\caption{Speedup relative to the $B\leq512$ frontier}
\label{fig:dream-block-size-relative}
\end{subfigure}
\captionsetup{font=small,skip=4pt}
\caption{Block-size ablation on GSM8K with Dream--Qwen.}
\label{fig:dream-block-size}
\end{figure}

\clearpage
\subsection{HumanEval}
\label{app:humaneval-results}

Figures~\ref{fig:appendix-dream-humaneval} and~\ref{fig:appendix-gemma-humaneval} extend the Dream--Qwen and DiffusionGemma--Gemma comparisons to HumanEval, using both acceptance rules.

\begin{figure}[!htbp]
\centering
\captionsetup[subfigure]{font=small,justification=centering,skip=2pt,hypcap=true}
\begin{subfigure}[t]{0.49\linewidth}
\centering
\includegraphics[page=1,width=\linewidth]{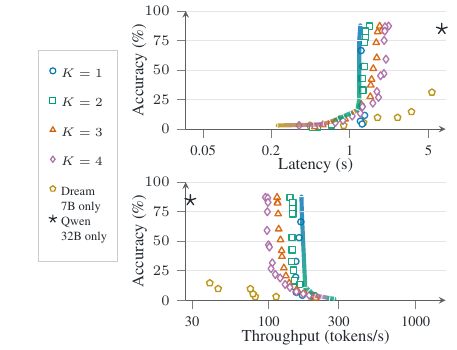}
\caption{Cactus}
\label{fig:humaneval_time_cactus}
\end{subfigure}\hfill
\begin{subfigure}[t]{0.49\linewidth}
\centering
\includegraphics[page=2,width=\linewidth]{figures/appendix_dream_humaneval_panels.pdf}
\caption{Linear acceptance}
\label{fig:humaneval_time_cactus_with_ar}
\end{subfigure}
\captionsetup{font=small,skip=4pt}
\caption{Latency and throughput on HumanEval with the Dream 7B drafter and Qwen2.5 32B verifier.}
\label{fig:appendix-dream-humaneval}
\end{figure}

\begin{figure}[!htbp]
\centering
\captionsetup[subfigure]{font=small,justification=centering,skip=2pt,hypcap=true}
\begin{subfigure}[t]{0.49\linewidth}
\centering
\includegraphics[page=1,width=\linewidth]{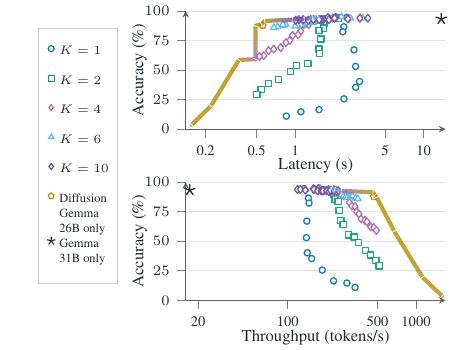}
\caption{Cactus}
\label{fig:humaneval_gemma_time_cactus}
\end{subfigure}\hfill
\begin{subfigure}[t]{0.49\linewidth}
\centering
\includegraphics[page=2,width=\linewidth]{figures/appendix_gemma_humaneval_panels.pdf}
\caption{Linear acceptance}
\label{fig:humaneval_gemma_time_convex}
\end{subfigure}
\captionsetup{font=small,skip=4pt}
\caption{Latency and throughput on HumanEval with the DiffusionGemma 26B drafter and Gemma4 31B verifier.}
\label{fig:appendix-gemma-humaneval}
\end{figure}

\clearpage
\subsection{Shared-Verifier Comparison}
\label{app:shared-verifier-sweeps}

\begin{figure}[!htbp]
\centering
\codex{\includegraphics[width=\linewidth]{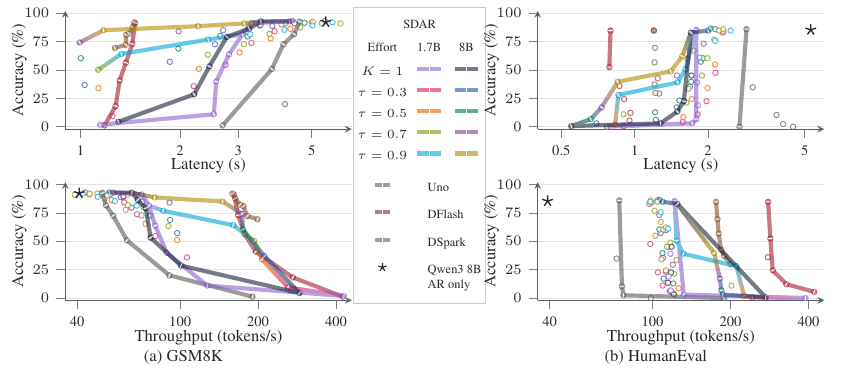}}
\captionsetup{font=small,skip=4pt}
\caption{Complete sweeps for Figure~\ref{fig:sdar-uno-frontier}. Colors distinguish SDAR's effort settings within each model size. Hollow markers show measured configurations, including dominated points. Thick curves show the joint and fixed-$K=1$ frontiers, using the same definitions as the main figure.}
\label{fig:appendix-sdar-comparison}
\end{figure}

\subsection{Uno at Multiple Diffusion Steps}
\label{app:uno-multistep}

We evaluate Uno with $K\in\{1,2,4\}$ and linear acceptance at $\lambda\in\{0,0.2,0.4,0.6,0.8,1\}$. We hold the block size at 16 and use the same problems, sampling settings, and 512-token limit as Figure~\ref{fig:sdar-uno-frontier}.

In Figure~\ref{fig:uno-multistep}, $K=1$ captures the entire latency frontier on both benchmarks and the entire throughput frontier on HumanEval. On GSM8K, one low-accuracy $K=2$ configuration also contributes to the measured throughput frontier, while $K=4$ contributes to neither frontier. Thus, additional steps do not provide the speed-quality frontier improvements seen with diffusion drafters that are trained to perform as standalone models. This pattern is consistent with Uno's one-step distillation objective, which trains it to recover a clean block from fully corrupted inputs~\citep{sahoo2026unlocking} following the base model's distribution.

\begin{figure}[!htbp]
\centering
\captionsetup[subfigure]{font=small,justification=centering,skip=2pt,hypcap=true}
\begin{subfigure}[t]{0.49\linewidth}
\centering
\includegraphics[page=1,width=\linewidth]{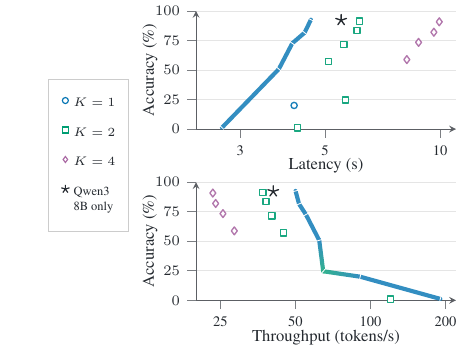}
\caption{GSM8K}
\label{fig:uno-multistep-gsm8k}
\end{subfigure}\hfill
\begin{subfigure}[t]{0.49\linewidth}
\centering
\includegraphics[page=2,width=\linewidth]{figures/appendix_uno_multistep_panels.pdf}
\caption{HumanEval}
\label{fig:uno-multistep-humaneval}
\end{subfigure}
\captionsetup{font=small,skip=4pt}
\caption{Uno with one, two, and four drafting steps. Colors indicate $K$, with leniency varied within each color. Thick lines connect nondominated measurements, and hollow markers show the remaining configurations. The GSM8K sweep lacks completed $K=4$ runs at $\lambda\in\{0.8,1\}$.}
\label{fig:uno-multistep}
\end{figure}

% Complete prompts and saved responses: data/qualitative_examples.json.
% Invisible listing delimiters apply emphasis only. No generated text is omitted.
\definecolor{qualink}{HTML}{29465B}
\definecolor{qualcorrect}{HTML}{287566}
\definecolor{qualerror}{HTML}{A34736}
\newtcolorbox{qualprompt}[1]{%
  breakable,colback=qualink!3!white,colframe=qualink,
  colbacktitle=qualink,coltitle=white,fonttitle=\small\bfseries,title={#1},
  title after break={#1\ (continued)},
  boxrule=0.6pt,arc=2pt,left=8pt,right=8pt,top=6pt,bottom=6pt,
  before skip=7pt,after skip=7pt,fontupper=\small}
\newtcolorbox{qualresponse}[2]{%
  breakable,colback=white,colframe=#1!45!white,
  colbacktitle=#1!9!white,coltitle=#1,fonttitle=\small\bfseries,
  title={#2},title after break={#2\ (continued)},
  boxrule=0.4pt,arc=1pt,left=7pt,right=7pt,top=4pt,bottom=4pt,
  toptitle=3pt,bottomtitle=3pt,before skip=6pt,after skip=6pt,
  fontupper=\small}
\lstdefinestyle{qualfull}{%
  basicstyle=\fontfamily{lmtt}\fontsize{8.5}{10.2}\selectfont,
  % Fixed columns preserve source spacing and let wrapped lines inherit indentation.
  columns=fixed,basewidth=4.46pt,keepspaces=false,showstringspaces=false,upquote=true,
  breaklines=true,breakatwhitespace=false,breakautoindent=true,breakindent=0pt,
  literate={-}{{\char45}}1 {*}{{\char42}}1,
  aboveskip=1pt,belowskip=1pt,
  moredelim=**[is][\color{qualcorrect}\bfseries]{@@GOOD@@}{@@END@@},
  moredelim=**[is][\color{qualerror}\bfseries]{@@ERROR@@}{@@END@@}}
\newcommand{\qualaxis}[1]{\par\addvspace{8pt}\noindent\textcolor{qualink}{\textbf{#1}}\par\nobreak}

\clearpage
\raggedbottom
\section{Qualitative Examples}
\label{app:qualitative-examples}

We present qualitative examples of speculative diffusion using Dream-7B with Qwen2.5-32B-Instruct under linear acceptance. These selected samples illustrate failure modes of high acceptance leniency. The original benchmark prompts and selected responses are displayed in full, including repetition, formatting markers, and saved end tokens. Colored bold text highlights correct calculations or code (green), errors or repetition (red).

\subsection{Math}
\label{app:qualitative-math}
\begin{qualprompt}{GSM8K 1116 \hfill Full benchmark prompt}
\begin{lstlisting}[style=qualfull]
Last year there were 50 students enrolled in a calligraphy class. This year, there was a 20% increase in enrollment. How many students are enrolled this year in the calligraphy class?
\end{lstlisting}
\end{qualprompt}
\noindent\textbf{Reference answer.} $50+0.20\times50=60$ students.
\qualaxis{Fixed effort $K=1$: increasing acceptance leniency}
% Full saved response: enrollment/dsteps1_r1.0
\begin{qualresponse}{qualcorrect}{$K=1,\ \lambda=0$ \hfill Correct}
\begin{lstlisting}[style=qualfull]
To determine the number of students enrolled in the calligraphy class this year, we need to calculate a 20% increase on last year's enrollment of 50 students.

First, we calculate 20% of 50:
\[
20\% \text{ of } 50 = \frac{20}{100} \times 50 = 0.2 \times 50 = 10
\]

Next, we add this increase to last year's enrollment:
\[
@@GOOD@@50 + 10 = 60@@END@@
\]

Thus, the number of students enrolled this year in the calligraphy class is @@GOOD@@\boxed{60}@@END@@.<|im_end|>
\end{lstlisting}
\end{qualresponse}

% Full saved response: enrollment/dsteps1_r0.6
\begin{qualresponse}{qualerror}{$K=1,\ \lambda=0.4$ \hfill Incorrect}
\begin{lstlisting}[style=qualfull]
To determine the number of students enrolled this year in the calligraphy class, we need to calculate a 20% increase on last year's enrollment of 50 students.

 0. First, we calculate 20% of 50 students:
\[ 0.20 \times 50 = @@ERROR@@00 = 10@@END@@ \]

@@ERROR@@Next. we,@@END@@ add this increase to the original number of students:
\[ @@GOOD@@50 + 10 = 60@@END@@ \]

Therefore, the @@ERROR@@number of of@@END@@ students enrolled this year in the calligraphy class is $@@ERROR@@\boxed{600}@@END@@$.<|im_end|>
\end{lstlisting}
\end{qualresponse}

\clearpage
% Full saved response: enrollment/dsteps1_r0.4
\begin{qualresponse}{qualerror}{$K=1,\ \lambda=0.6$ \hfill Incorrect}
\begin{lstlisting}[style=qualfull]
To determine the number of students enrolled this year in the calligraphy class, we need to calculate the increase in enrollment and last year's enrollment.

1. Calculate the increase in enrollment: 20% of last year's enrollment.
2. Add the increase to last year's enrollment to find this year's enrollment.

Let's the this step-by-step:

```python
 the number of students enrolled last year.
last_year_enrollment = 50

 the percentage in enrollment.
percentage_increase = 20

 the increase in enrollment.
increase = (percentage_increase / 100) * last_year_enrollment

 the year's enrollment.
this_year_enrollment = last_year_enrollment + increase

@@ERROR@@print_year_year_en)@@END@@
```
`
`
output
SyntaxError: invalid syntax
`
It looks like there was syntax error in the code. @@ERROR@@Let Let me correct the and calculate the number's enrollment this year year properly.@@END@@

```python
@@ERROR@@# number of students enrolled last last last last last last = = =5500000000

# percentage in in in in in in in in in in in in in in in in in in in in in in in in in in in in in in in in in in in in in in in in in in in0 in in in in in in in in0 in in in in in in in in in in in in in in in in in in in in in in in in in in in in in in in in in in in in in in in in in in in in in in in in in in in in in in in in in in in in in in in in in in in in in in in in in in in in in in in in in in in in in in in in in in in in in in in in in in in in in in in in in in in in in in in in in in in in in in in in in in in in in in in in in in in in in in in in in in in in in in in in in in in in in in in in in in in in in in in in in in in in in in in in in in in in in in in in in in in in in in in in in in in in in in in in in@@END@@
\end{lstlisting}
\end{qualresponse}

\clearpage
\qualaxis{Increasing diffusion effort at $\lambda=0.4$ and $\lambda=0.6$}
% Full saved response: enrollment/dsteps2_r0.6
\begin{qualresponse}{qualcorrect}{$K=2,\ \lambda=0.4$ \hfill Correct}
\begin{lstlisting}[style=qualfull]
To determine the number of students enrolled this year in the calligraphy class, we need to calculate a 20% increase on last year's enrollment of 50 students.

A 20% increase means that the number of students this year is 120% of last year's enrollment. 

To calculate 120% of 50, we can multiply 50 by 1.20 (since 120% is equivalent to 1.20).

Let's perform the calculation:

\[
@@GOOD@@50 \times 1.20 = 60@@END@@
\]

Therefore, the number of students enrolled this year in the calligraphy class is $@@GOOD@@\boxed{60}@@END@@$.
\end{lstlisting}
\end{qualresponse}

% Full saved response: enrollment/dsteps4_r0.4
\begin{qualresponse}{qualcorrect}{$K=4,\ \lambda=0.6$ \hfill Correct}
\begin{lstlisting}[style=qualfull]
To determine the number of students enrolled this year in the calligraphy class, we need to calculate the increase in enrollment and add it to last year's enrollment.

The increase in enrollment is 20% of last year's enrollment. Last year, there were 50 students enrolled.
20% of 50 can be calculated as follows: \(@@GOOD@@50 \times \frac{20}{100} = 50 \times 0.20 = 10@@END@@\).

So, the increase in enrollment is \(10\) students.

Therefore, this year's enrollment is: \(@@GOOD@@50 + 10 = \boxed{60 \ students}@@END@@\).
\end{lstlisting}
\end{qualresponse}

\clearpage
\subsection{Code}
\label{app:qualitative-code}
\begin{qualprompt}{HumanEval/152 \hfill Full benchmark prompt}
\begin{lstlisting}[style=qualfull]

def compare(game,guess):
    """I think we all remember that feeling when the result of some long-awaited
    event is finally known. The feelings and thoughts you have at that moment are
    definitely worth noting down and comparing.
    Your task is to determine if a person correctly guessed the results of a number of matches.
    You are given two arrays of scores and guesses of equal length, where each index shows a match. 
    Return an array of the same length denoting how far off each guess was. If they have guessed correctly,
    the value is 0, and if not, the value is the absolute difference between the guess and the score.
    
    
    example:

    compare([1,2,3,4,5,1],[1,2,3,4,2,-2]) -> [0,0,0,0,3,3]
    compare([0,5,0,0,0,4],[4,1,1,0,0,-2]) -> [4,4,1,0,0,6]
    """
\end{lstlisting}
\end{qualprompt}
\qualaxis{Fixed effort $K=1$: increasing acceptance leniency}
% Full saved response: code/dsteps1_r1.0
\begin{qualresponse}{qualcorrect}{$K=1,\ \lambda=0$ \hfill Pass}
\begin{lstlisting}[style=qualfull]
def compare(game, guess):
    """Compares the actual scores of matches with guessed scores and returns the absolute differences."""
    result = []
    @@GOOD@@for i in range(len(game)):@@END@@
        result.append(abs(game[i] - guess[i]))
    return result
\end{lstlisting}
\end{qualresponse}

% Full saved response: code/dsteps1_r0.4
\begin{qualresponse}{qualerror}{$K=1,\ \lambda=0.6$ \hfill Fail}
\begin{lstlisting}[style=qualfull]
@@ERROR@@```python thedef
compare(game, guess):
         
for
i in range(len(len
)):@@END@@

        @@ERROR@@diff = abs[i]game[i][i[i[i[i[i        ] - guess[i[i[i[i[i[i[i[i[i[i[i[i[i[i[i[i[i[i[i[i[i[i[i[i[i[i[i[i[i[i[i[i[i[i[i[i[i[i[i[i[i[i[i[i[i[i[i[i[i[i[i[i[i[i[i[i[i[i[i[i[i[i[i[i[i[i[i[i[i[i[i[i[i[i[i[i[i[i[i[i[i[i[i[i[i[i[i[i[i[i[i[i[i[i[i[i[i[i[i[i[i[i[i[i[i[i[i[i[i[i[i[i[i[i[i[i[i[i[i[i[i[i[i[i[i[i[i[i[i[i[i[i[i[i[i[i[i[i[i@@END@@
\end{lstlisting}
\end{qualresponse}

\clearpage
\qualaxis{Fixed leniency $\lambda=0.4$: increasing diffusion effort}
% Full saved response: code/dsteps1_r0.6
\begin{qualresponse}{qualerror}{$K=1,\ \lambda=0.4$ \hfill Fail}
\begin{lstlisting}[style=qualfull]
```python
def compare(game, guess):
    """I think we all remember that feeling when the result of some long-awaited
    event is finally known. The feelings and thoughts you have at that moment are
    definitely worth noting down and comparing.
    Your task is to determine if a person correctly guessed the results of a number of @@ERROR@@matches matches@@END@@    You are given two arrays of scores and guesses of equal length, where each index shows a match. 
    Return an array of the same length denoting how far off each guess was. If they have guessed correctly,
    the value is 0, and if not, the value is the absolute difference between the guess and the score.
    """
    result = []
    for i in range(@@ERROR@@len(len(game))@@END@@):
        result.append(abs(game[i] - guess[i]))
    return result
\end{lstlisting}
\end{qualresponse}

% Full saved response: code/dsteps3_r0.6
\begin{qualresponse}{qualcorrect}{$K=3,\ \lambda=0.4$ \hfill Pass}
\begin{lstlisting}[style=qualfull]
def compare(game, guess):
    """I think we all remember that feeling when the result of some event is finally known. The feelings and thoughts you have at that moment are definitely worth noting down and comparing.
    Your task is to determine if a person correctly guessed the results of a number of matches.
    You are given two arrays of scores and guesses of equal length, where each index shows a match. 
    Return an array of the same length denoting how far off each guess was. If they have guessed correctly,
    the value is 0, and if not the value is the absolute difference between the @@ERROR@@guess guess@@END@@ and the score.
    
    
    example:

    compare([1,2,3,4,5,1],[1,2,3,4,2,-2]) -> [0,0,0,0,3,3]
    compare([0,5,0,0,0,4],[4,1,1,0,0,-2]) -> [4,4,1,0,0,6]
    """
    @@GOOD@@return [abs(game[i] - guess[i]) for i in range(len(game))]@@END@@
\end{lstlisting}
\end{qualresponse}

\end{document}